\newcommand{\R}{\mathbb{R}}
\newcommand{\N}{\mathbb{N}}
\newcommand{\XX}{\mathcal{X}}
\newcommand{\EE}{\mathcal{E}}
\newcommand{\MM}{\mathcal{M}}
\newcommand{\FF}{\mathcal{F}}
\newcommand{\GG}{\mathcal{G}}
\newcommand{\X}{\mathbb{X}}
\newcommand{\perm}{\mathfrak{S}}
\newcommand{\groundspace}{\R_>^2 \cup \{\Delta \} }
\newcommand{\upperdiag}{\R^2_>}
\newcommand{\mtot}{m_{\mathrm{tot}}}
\newcommand{\supp}{\mathrm{supp}}
\newcommand{\diag}{\mathrm{diag}}
\newcommand{\dist}{\mathrm{dist}}
\newcommand{\reshape}{\overrightarrow}
\newcommand{\defeq}{\vcentcolon=}
\def\bL{\mathbf{L}}
\def\bS{\mathbf{S}}
\DeclareMathOperator{\Dgm}{Dgm}
\def\bc{\mathbf{c}}
\def\bu{\mathbf{u}}
\def\bv{\mathbf{v}}
\def\bb{\mathbf{b}}
\def\ba{\mathbf{a}}
\def\bk{\mathbf{k}}
\def\bm{\mathbf{m}}
\def\bx{\mathbf{z}}
\newcommand{\projop}{\mathbf{R}}
\newcommand{\ones}{\mathbf{1}}
\DeclareMathOperator*{\minimize}{\mathrm{minimize}}
\newtheorem*{definition*}{Definition}
\newtheorem*{theorem*}{Theorem}
\newtheorem{prop}{Proposition}
\newtheorem*{prop*}{Proposition}
\newtheorem*{rem*}{Remark}
\newtheorem{lemma}{Lemma}
\newtheorem*{lemma*}{Lemma}
\newtheorem*{corollary*}{Corollary}
\title{Large Scale computation of Means and Clusters for \\ 
Persistence Diagrams using Optimal Transport}
\author{
  Théo Lacombe \\
  Datashape\\
  Inria Saclay\\
  \texttt{theo.lacombe@inria.fr} \\
  \And
  Marco Cuturi \\
  Google Brain, and\\
  CREST, ENSAE \\
  \texttt{cuturi@google.com} \\
  \And
  Steve Oudot \\
  Datashape \\
  Inria Saclay \\
  \texttt{steve.oudot@inria.fr} \\
}
\begin{document}

\maketitle

\begin{abstract}
Persistence diagrams (PDs) are now routinely used to summarize the underlying topology of complex data. Despite several appealing properties, incorporating PDs in learning pipelines can be challenging because their natural geometry is not Hilbertian. Indeed, this was recently exemplified in a string of papers which show that the simple task of averaging a few PDs can be computationally prohibitive. We propose in this article a tractable framework to carry out standard tasks on PDs at scale, notably evaluating distances, estimating barycenters and performing clustering. This framework builds upon a reformulation of PD metrics as optimal transport (OT) problems. Doing so, we can exploit recent computational advances: the OT problem on a planar grid, when regularized with entropy, is convex can be solved in linear time using the Sinkhorn algorithm and convolutions. This results in scalable computations that can stream on GPUs. We demonstrate the efficiency of our approach by carrying out clustering with diagrams metrics on several thousands of PDs, a scale never seen before in the literature.
\end{abstract}

\section{Introduction}
\vskip-0.2cm
Topological data analysis (TDA) has been used successfully in a wide array of applications, for instance in medical~\citep{tda:nicolau2011breastCancer} or material~\citep{hiraoka2016hierarchical} sciences, computer vision~\citep{li2014persistence} or to classify NBA players~\citep{tda:lum2013NBA}. The goal of TDA is to exploit and account for the complex topology (connectivity, loops, holes, etc.) seen in modern data. The tools developed in TDA are built upon persistent homology theory~\citep{tda:edelsbrunner2000topologicalSeminal, tda:zomorodian2005computing, tda:edelsbrunner2010computational} whose main output is a descriptor called a \emph{persistence diagram} (PD) which encodes in a compact form---roughly speaking, a point cloud in the upper triangle of the square $[0,1]^2$---the topology of a given space or object at all scales. 

\textbf{Statistics on PDs. } Persistence diagrams have appealing properties: in particular they have been shown to be stable with respect to perturbations of the input data~\citep{tda:cohen2007stability, tda:chazal2009proximityThmPD, tda:chazal2014persistenceStabilityAdvanced}. This stability is measured either in the so called \emph{bottleneck} metric or in the $p$-th diagram distance, which are both distances that compute optimal partial matchings. While theoretically motivated and intuitive, these metrics are by definition very costly to compute. Furthermore, these metrics are not Hilbertian, preventing a faithful application of a large class of standard machine learning tools ($k$-means, PCA, SVM) on PDs.

\textbf{Related work.} To circumvent the non-Hilbertian nature of the space of PDs, one can of course map diagrams onto simple feature vectors. Such features can be either finite dimensional~\citep{tda:carriere2015stableSignature3DShape, tda:adams2017persistenceImages}, or infinite through kernel functions~\citep{tda:reininghaus2015stableKernel, tda:bubenik2015statistical, tda:carriere2017sliced}. A known drawback of kernel approaches on a rich geometric space such as that formed by PDs is that once PDs are mapped as feature vectors, any ensuing analysis remains in the space of such features (the ``inverse image'' problem inherent to kernelization). They are therefore not helpful to carry out simple tasks in the space of PDs, such as that of averaging PDs, namely computing the \citeauthor{frechet1948elements} mean of a family of PDs. Such problems call for algorithms that are able to optimize directly in the space of PDs, and were first addressed by \citet{tda:mukherjee2011probabilitymeasure} and \citet{tda:turner2013meansAndMedians}. \citet{tda:turner2014frechet} provides an algorithm that converges to a local minimum of the \citeauthor{frechet1948elements} function by successive iterations of the Hungarian algorithm. However, the Hungarian algorithm does not scale well with the size of diagrams, and non-convexity yields potentially convergence to bad local minima.

\textbf{Contributions. } We reformulate the computation of diagram metrics as an optimal transport (OT) problem, opening several perspectives, among them the ability to benefit from entropic regularization~\citep{ot:cuturi2013sinkhorn}. We provide a new numerical scheme to bound OT metrics, and therefore diagram metrics, with additive guarantees. Unlike previous approximations of diagram metrics, ours can be parallelized and implemented efficiently on GPUs. These approximations are also differentiable, leading to a scalable method to compute barycenters of persistence diagrams. In exchange for a discretized approximation of PDs, we recover a convex problem, unlike previous formulations of the barycenter problem for PDs. We demonstrate the scalability of these two advances (accurate approximation of the diagram metric at scale and barycenter computation) by providing the first tractable implementation of the $k$-means algorithm in the space of PDs.
\begin{figure}
	\center
	\includegraphics[width=0.75\linewidth]{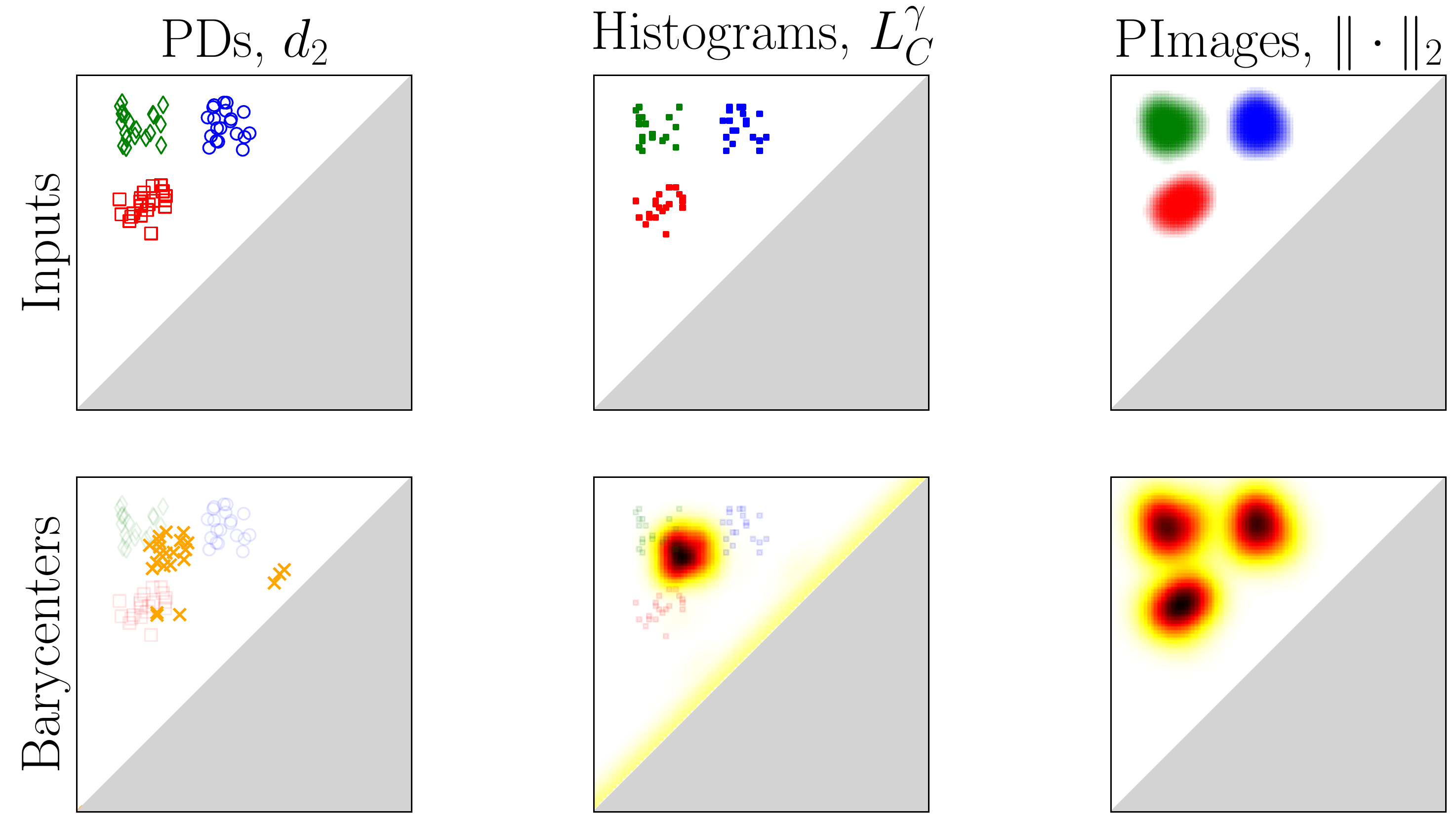}
	\vskip-0.2cm
	\caption{Illustration of differences between Fr\'echet means with Wasserstein and Euclidean geometry. The top row represents input data, namely persistence diagrams (left), discretization of PDs as histograms (middle), and vectorization of PDs as persistence images in $\R^{100 \times 100}$ (right) \citep{tda:adams2017persistenceImages}. The bottom row represents the estimated barycenters (orange scale) with input data (shaded), using the approach of \citet{tda:turner2014frechet} (left), our optimal tranport based approach (middle) and the arithmetic mean of persistence images (right).}
	\label{fig:Pim_vs_Pd}
	\vskip-0.6cm
\end{figure}

\textbf{Notations for matrix and vector manipulations.} When applied to matrices or vectors, operators $\exp, \log$, division, are always meant \emph{element-wise}. $u \odot v$ denotes element-wise multiplication (Hadamard product) while $Ku$ denotes the matrix-vector product of $K \in \R^{d \times d}$ and $u \in \R^d$.


\vspace{-0.3cm}
\section{Background on OT and TDA}
\label{sec:TDA&OT}
\vskip-0.2cm
\textbf{OT. } Optimal transport is now widely seen as a central tool to compare probability measures~\citep{ot:villani2003topicsInOT, ot:villani2008optimal, otam}. Given a space $\XX$ endowed with a cost function $c : \XX \times \XX \rightarrow \R_+$, we consider two \emph{discrete} measures $\mu$ and $\nu$ on $\XX$, namely measures that can be written as positive combinations of diracs, $\mu = \sum_{i=1}^n a_i \delta_{x_i}, \nu = \sum_{j=1}^m b_j \delta_{y_j}$ with weight vectors $a \in \R_+^n, b \in \R_+^m$ satisfying $\sum_i a_i = \sum_j b_j$ and all $x_i,y_j$ in $\XX$. The $n \times m$ cost matrix $C = (c(x_i,y_j))_{ij}$ and the transportation polytope $\Pi(a,b) \defeq \{ P \in \R^{n \times m}_+ |  P \ones_m = a, P^T \ones_n = b\}$ define an optimal transport problem whose optimum $\bL_C$ can be computed using either of two linear programs, dual to each other,
\begin{equation}
\bL_C(\mu,\nu) \defeq \min_{P \in \Pi(a,b)} \braket{P,C} = \max_{(\alpha, \beta)\in \Psi_C} \braket{\alpha, a} + \braket{\beta, b}\label{eq:DiscreteTransportProblem}
\end{equation}
where $\braket{ \cdot, \cdot}$ is the Frobenius dot product and $\Psi_C$ is the set of pairs of vectors $(\alpha,\beta)$ in $\R^n\times \R^m$
such that their tensor sum $\alpha \oplus \beta$ is smaller than $C$, namely $\forall i, j, \alpha_i + \beta_j \leq C_{ij}$. Note that when $n = m$ and all weights $a$ and $b$ are uniform and equal, the problem above reduces to the computation of an \emph{optimal matching}, that is a permutation $\sigma \in \perm_n$ (with a resulting optimal plan $P$ taking the form $P_{ij} = 1_{\sigma(i) = j}$). That problem has clear connections with diagram distances, as shown in \S\ref{sec:DASOT}.

\textbf{Entropic Regularization.} Solving the optimal transport problem is intractable for large data. \citeauthor{ot:cuturi2013sinkhorn} proposes to consider a regularized formulation of that problem using entropy, namely:
\begin{align}
		\bL_C^\gamma (a,b) &\defeq \min_{P \in \Pi(a,b)} \braket{P,C} - \gamma h(P) \label{eq:SinkhornMinimizationProblem}\\
		&= \max_{\alpha\in \R^n, \beta\in \R^m} \braket{\alpha,a} +\braket{\beta,b} - \gamma \sum_{i,j}e^{\tfrac{\alpha_i+\beta_j-C_{i,j}}{\gamma}},\label{eq:SinkhornDual}
\end{align} 
where $\gamma>0$ and $h(P) \defeq - \sum_{ij} P_{ij} (\log P_{ij}-1)$. Because the negentropy is 1-strongly convex, that problem has a unique solution $P^\gamma$ which takes the form, using first order conditions, \begin{equation}
	P^\gamma = \diag(u^\gamma) K \diag(v^\gamma) \in \R^{n \times m},
	\label{eq:SinkhornTransportPlan}
\end{equation}
where $K = e^{-\frac{C}{\gamma}}$ (term-wise exponentiation), and $(u^\gamma, v^\gamma) \in \R^n \times \R^m$ is a fixed point of the Sinkhorn map (term-wise divisions):
\begin{equation}
	S: (u,v) \mapsto \left( \frac{a}{K v} , \frac{b}{K^T u} \right).
	\label{eq:SinkhornMap}
\end{equation}
Note that this fixed point is the limit of any sequence $(u_{t+1}, v_{i+1}) = S(u_t, v_t)$, yielding a straightforward algorithm to estimate $P^\gamma$. \citeauthor{ot:cuturi2013sinkhorn} considers the transport cost of the optimal regularized plan, $\bS_C^\gamma(a,b)\defeq\braket{P^\gamma,C}=(u^\gamma)^T(K\odot C)v^\gamma$,  to define a \emph{Sinkhorn divergence} between $a,b$ (here $\odot$ is the term-wise multiplication). One has that $\bS_C^\gamma(a,b) \rightarrow \bL_C(a,b)$ as $\gamma \rightarrow 0$, and more precisely $P^\gamma$ converges to the optimal transport plan solution of \eqref{eq:DiscreteTransportProblem} with maximal entropy. That approximation can be readily applied to any problem that involves terms in $\bL_C$, notably barycenters~\citep{ot:cuturi2014fast, ot:cuturi2015convolutional,ot:benamou2015iterative}.

\textbf{Eulerian setting. } When the set $\XX$ is finite with cardinality $d$, $\mu$ and $\nu$ are entirely characterized by their probability weights $a,b \in \R_+^d$ and are often called \textit{histograms} in a \emph{Eulerian} setting. When $\XX$ is not discrete, as when considering the plane $[0,1]^2$, we therefore have a choice of representing measures as sums of diracs, encoding their information through locations, or as discretized histograms on a planar grid of arbitrary granularity. Because the latter setting is more effective for entropic regularization~\citep{ot:cuturi2015convolutional}, this is the approach we will favor in our computations.

\begin{figure*}
\includegraphics[width=\textwidth]{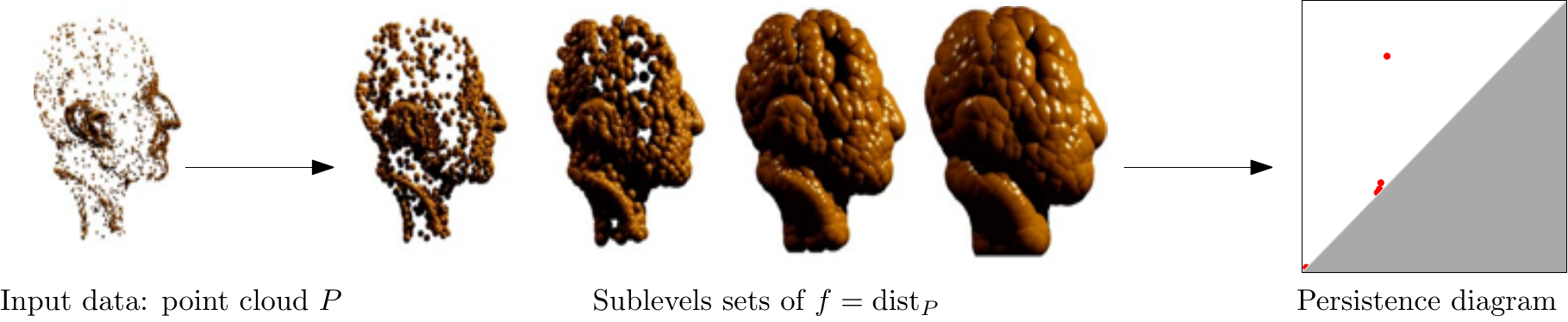}
\caption{Sketch of persistent homology. $\X = \R^3$ and $f(x) = \min_{p \in P} \| x-p \|$ so that sublevel sets of $f$ are unions of balls centered at the points of $P$. First (resp second) coordinate of points in the persistence diagram encodes appearance scale (resp disappearance) of cavities in the sublevel sets of $f$. The isolated red point accounts for the presence of a \emph{persistent} hole in the sublevel sets, inferring the underlying spherical geometry of the input point cloud.}
\label{fig:tda_pipeline}
\vskip-0.3cm
\end{figure*}

\textbf{Persistent homology and Persistence Diagrams. } Given a topological space $\X$ and a real-valued function $f : \X \rightarrow \R$, persistent homology provides---under mild assumptions on $f$, taken for granted in the remaining of this article---a topological signature of $f$ built on its \emph{sublevel sets} $\left(f^{-1}((-\infty, t])\right)_{t \in \R}$, and called a \emph{persistence diagram} (PD), denoted as $\Dgm(f)$. In practice, it is of the form $\Dgm(f) = \sum_{i=1}^n \delta_{x_i}$, namely a point measure with finite support included in $\upperdiag \defeq \{ (s,t) \in \R^2 | s < t \}$. Each point $(s,t)$ in $\Dgm(f)$ can be understood as a topological feature (connected component, loop, hole...) which appears at scale $s$ and disappears at scale $t$ in the sublevel sets of $f$. Comparing the persistence diagrams of two functions $f,g$ measures their difference from a topological perspective: presence of some topological features, difference in appearance scales, etc.
The space of PDs is naturally endowed with a partial matching metric defined as ($p\geq 1$):
\begin{equation}
d_p(D_1,D_2) \defeq \left( \min_{\zeta \in \Gamma(D_1,D_2)} \sum_{(x,y) \in \zeta} \|x-y\|^p_p + \sum_{s \in D_1 \cup D_2 \backslash \zeta} \| s - \pi_\Delta(s) \|^p_p \right)^{\frac{1}{p}},
	\label{eq:Wasserstein_Dgm}
\end{equation}
where $\Gamma(D_1,D_2)$ is the set of all partial matchings between points in $D_1$ and points in $D_2$ and $\pi_\Delta(s)$ denotes the orthogonal projection of an (unmatched) point $s$ to the diagonal $\{ (x,x) \in \R^2, x \in \R \}$.
The mathematics of OT and diagram distances share a key idea, that of matching, but differ on an important aspect: diagram metrics can cope, using the diagonal as a sink, with measures that have a varying total number of points. We solve this gap by leveraging an unbalanced formulation for OT.

\vskip-0.2cm
\section{Fast estimation of diagram metrics using Optimal Transport}
\label{sec:DASOT}
\vskip-0.2cm
In the following, we start by explicitly formulating \eqref{eq:Wasserstein_Dgm} as an optimal transport problem. Entropic smoothing provides us a way to approximate \eqref{eq:Wasserstein_Dgm} with controllable error. In order to benefit mostly from that regularization (matrix parallel execution, convolution, GPU---as showcased in \citep{ot:cuturi2015convolutional}), implementation requires specific attention, as described in propositions \ref{prop:SkMap_iteration_PD}, \ref{prop:CompSkPD}, \ref{prop:upperBoundSkPD}.

\textbf{PD metrics as Optimal Transport. } The main differences between \eqref{eq:Wasserstein_Dgm} and \eqref{eq:DiscreteTransportProblem} are that PDs do not generally have the same mass, i.e.~number of points (counted with multiplicity), and that the diagonal plays a special role by allowing to match any point $x$ in a given diagram with its orthogonal projection $\pi_\Delta(x)$ onto the diagonal. \citeauthor{ot:guittet2002extended}'s formulation for partial transport~\citeyearpar{ot:guittet2002extended} can be used to account for this by creating a ``sink'' bin corresponding to that diagonal and allowing for different total masses. The idea of representing the diagonal as a single point already appears in the bipartite graph problem of \citet{tda:edelsbrunner2010computational} (Ch.VIII). The important aspect of the following proposition is the clarification of the partial matching problem \eqref{eq:Wasserstein_Dgm} as a standard OT problem \eqref{eq:DiscreteTransportProblem}. 

Let $\groundspace$ be $\upperdiag$ extended with a \emph{unique} virtual point $\{\Delta\}$ encoding the diagonal. We introduce the linear operator $\projop$ which, to a finite non-negative measure $\mu$ supported on $\R^2_>$, associates a dirac on $\Delta$ with mass equal to the total mass of $\mu$, namely $\projop : \mu \mapsto |\mu| \delta_\Delta$.
	\begin{prop}\label{prop:corresp-dm-ot}
		Let $D_1 = \sum_{i=1}^{n_1} \delta_{x_i}$ and $D_2 = \sum_{j=1}^{n_2} \delta_{y_j}$ be two persistence diagrams with respectively $n_1$ points $x_1 \dots x_{n_1}$ and $n_2$ points $y_1 \dots y_{n_2}$. Let $p \geq 1$. Then:		
		\begin{equation}
			d_p(D_1, D_2)^p = \bL_C(D_1 + \projop D_2 , D_2 + \projop D_1),
		\end{equation}
where $C$ is the cost matrix with block structure
\begin{equation} C =  \begin{pmatrix}
	\widehat{C} & u \\
	v^T & 0			
		\end{pmatrix}
\in \R^{(n_1 + 1) \times (n_2 + 1)},
\label{eq:cost_matrix_dgm_matching}	
\end{equation}
where  $u_i = \|x_i - \pi_\Delta(x_i)\|_p^p, v_j = \|y_j - \pi_\Delta(y_j)\|_p^p, \widehat{C}_{ij}=\|x_i-y_j\|_p^p$, for $i\leq n_1, j\leq n_2$.
	\label{prop:OTforPD}
	\end{prop}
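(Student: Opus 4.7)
The plan is to prove the equality by establishing both inequalities, viewing the transport plan $P \in \R^{(n_1+1)\times(n_2+1)}$ as encoding transport between the augmented point sets $\{x_1,\dots,x_{n_1},\Delta\}$ and $\{y_1,\dots,y_{n_2},\Delta\}$ with integer weight vectors $(1,\dots,1,n_2)$ and $(1,\dots,1,n_1)$ representing $D_1+\projop D_2$ and $D_2+\projop D_1$, respectively.

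For the bound $\bL_C(D_1+\projop D_2, D_2+\projop D_1) \leq d_p(D_1,D_2)^p$, I would start from an optimal partial matching $\zeta^\star$ realizing the right-hand side and build a transport plan explicitly: set $P_{ij}=1$ whenever $(x_i,y_j)\in\zeta^\star$; for each $x_i$ unmatched by $\zeta^\star$ set $P_{i,n_2+1}=1$; for each unmatched $y_j$ set $P_{n_1+1,j}=1$; and set $P_{n_1+1,n_2+1}=|\zeta^\star|$ to absorb the leftover diagonal mass. A direct count verifies the marginal constraints $P\ones=(1,\dots,1,n_2)$ and $P^\top\ones=(1,\dots,1,n_1)$. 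Because the bottom-right entry of $C$ is zero, $\braket{P,C}$ equals $\sum_{(x_i,y_j)\in\zeta^\star}\|x_i-y_j\|_p^p + \sum_{x_i\text{ unmatched}} u_i + \sum_{y_j\text{ unmatched}} v_j$, which is exactly $d_p(D_1,D_2)^p$.

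For the reverse inequality, the key observation is that all marginals are integer-valued. Since the constraint matrix of any transportation problem is totally unimodular, the polytope $\Pi(D_1+\projop D_2,D_2+\projop D_1)$ has integer vertices, and the linear program \eqref{eq:DiscreteTransportProblem} admits an optimizer $P^\star$ with entries in $\N$. The row constraint $\sum_j P^\star_{ij}=1$ for each $i\leq n_1$ forces exactly one entry of that row to equal $1$, and symmetrically for the first $n_2$ columns. Collecting the pairs $(x_i,y_j)$ with $i\leq n_1$, $j\leq n_2$, $P^\star_{ij}=1$ defines a partial matching $\zeta\in\Gamma(D_1,D_2)$; any $x_i$ absent from $\zeta$ necessarily satisfies $P^\star_{i,n_2+1}=1$, and likewise for unmatched $y_j$. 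The corner entry $P^\star_{n_1+1,n_2+1}$ contributes nothing to $\braket{P^\star,C}$, so the cost of $\zeta$ in \eqref{eq:Wasserstein_Dgm} equals $\braket{P^\star,C}=\bL_C$, yielding $d_p(D_1,D_2)^p \leq \bL_C$.

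The main obstacle is the second direction: a priori a fractional transport plan could split the unit mass sitting at some $x_i$ between a genuine target $y_j$ and the diagonal sink $\Delta$, a behaviour that has no counterpart in the discrete definition of partial matchings. Total unimodularity of the transportation polytope, a classical fact from combinatorial optimization, closes this gap by guaranteeing an integer optimum. Beyond this, the role of the zero in the lower-right corner of $C$ is essential bookkeeping: it allows the $|\zeta^\star|$ units of mass left over at the two virtual $\Delta$'s to be exchanged freely, making the two formulations cost-equivalent.
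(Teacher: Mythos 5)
Your proof is correct. It differs modestly in mechanics from the paper's, though both rest on the same underlying principle. The paper first splits the virtual diagonal mass into $n_2$ (resp.\ $n_1$) unit diracs, turning the problem into an $n\times n$ assignment problem with an expanded cost matrix $\widetilde{C}$ (the last column of $C$ repeated $n_1$ times, the last row repeated $n_2$ times, and a zero block in the corner); the equivalence of OT to optimal matching with uniform marginals then produces a permutation $\sigma$, whose restriction to the $x$-$y$ block is the partial matching. You instead stay in the native $(n_1+1)\times(n_2+1)$ polytope: one inequality is obtained by exhibiting a feasible integer plan built from an optimal partial matching (with the corner entry $P_{n_1+1,n_2+1}=|\zeta^\star|$ balancing the marginals), and the other is obtained by invoking total unimodularity of the transportation constraint matrix together with the integer marginals $(\ones_{n_1},n_2)$, $(\ones_{n_2},n_1)$ to extract an integer optimal vertex, from whose $\{0,1\}$ structure on the first $n_1$ rows and $n_2$ columns a partial matching is read off. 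Both routes ultimately hinge on integrality of vertices of transportation polytopes (Birkhoff/TU); yours avoids constructing and reasoning about the expanded $\widetilde{C}$, which makes the two inequalities slightly more transparent, at the cost of explicitly invoking total unimodularity rather than relying on the optimal-matching fact already stated in Section~2. Either argument is rigorous as written.
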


The proof seamlessly relies on the fact that, when transporting point measures with the \emph{same} mass (number of points counted with multiplicity), the optimal transport problem is equivalent to an optimal matching problem (see \S\ref{sec:TDA&OT}). Details are left to the supplementary material.

\textbf{Entropic approximation of diagram distances.} Following the correspondance established in Proposition \ref{prop:corresp-dm-ot}, entropic regularization can be used to approximate the diagram distance $d_p(\cdot, \cdot)$. Given two persistence diagrams $D_1, D_2$ with respective masses $n_1$ and $n_2$, let $n \defeq n_1 + n_2$, $a = (\ones_{n_1}, n_2) \in \R^{n_1+1}$, $b = (\ones_{n_2}, n_1) \in \R^{n_2+1}$, and $P^\gamma_t = \diag(u^\gamma_t) K \diag(v^\gamma_t)$ where $(u^\gamma_t, v^\gamma_t)$ is the output after $t$ iterations of the Sinkhorn map \eqref{eq:SinkhornMap}. Adapting the bounds provided by \citet{ot:altschuler2017nearLinearTime}, we can bound the error of approximating $d_p(D_1, D_2)^p$ by $\braket{P_t^\gamma, C}$:
\begin{equation}
\label{eq:Altschuler_bounds}
\begin{aligned}
|d_p(D_1,D_2)^p - \braket{P^\gamma_t, C}| \leq & 2 \gamma n \log \left( n \right) + \dist(P^\gamma_t , \Pi(a,b)) \|C\|_\infty
\end{aligned}
\end{equation}
where $\dist(P, \Pi(a,b)) \defeq \|P \ones - a\|_1 + \|P^T \ones - b\|_1$ (that is, error on marginals).

\citet{ot:Dvurechensky2018boundsSinkhorn} prove that iterating the Sinkhorn map \eqref{eq:SinkhornMap} gives a plan $P^\gamma_t$ satisfying $\dist(P_t^\gamma, \Pi(a,b)) < \varepsilon$ in $\mathcal{O}\left(\frac{\|C\|_\infty^2}{\gamma \varepsilon} + \ln(n) \right)$ iterations. Given \eqref{eq:Altschuler_bounds}, a natural choice is thus to take $\gamma = \frac{\varepsilon}{n \ln(n)}$ for a desired precision $\varepsilon$, which lead to a total of $\mathcal{O}\left( \frac{n \ln(n) \|C\|_\infty^2}{\varepsilon^2}\right)$ iterations in the Sinkhorn loop. These results can be used to pre-tune parameters $t$ and $\gamma$ to control the approximation error due to smoothing. However, these are worst-case bounds, controlled by max-norms, and are often too pessimistic in practice. To overcome this phenomenon, we propose on-the-fly error control, using approximate solutions to the smoothed primal \eqref{eq:SinkhornMinimizationProblem} and dual \eqref{eq:SinkhornDual} optimal transport problems, which provide upper and lower bounds on the optimal transport cost.

\textbf{Upper and Lower Bounds. } The Sinkhorn algorithm, after at least one iteration ($t \geq 1$), produces feasible \emph{dual} variables $(\alpha_t^\gamma, \beta_t^\gamma)=(\gamma \log(u^\gamma_t), \gamma \log(v^\gamma_t))\in\Psi_C$ (see below~\eqref{eq:DiscreteTransportProblem} for a definition). Their objective value, as measured by $\braket{\alpha_t^\gamma,a}+\braket{\beta_t^\gamma,b}$, performs poorly as a lower bound of the true optimal transport cost (see Fig.~\ref{fig:c_transform_vs_naive} and \S\ref{sec:expe} below) in most of our experiments. To improve on this, we compute the so called \emph{$C$-transform} $(\alpha_t^\gamma)^c$ of $\alpha^\gamma_t$ \citep[\S 1.6]{otam}, defined as:
\[\forall j, (\alpha^\gamma_t)^c_j = \max_i \{ C_{ij} - \alpha_i \}, j\leq n_2+1. \]
Applying a $C^T$-transform on $(\alpha_t^\gamma)^c$, we recover two vectors $(\alpha_t^\gamma)^{c \bar{c}} \in \R^{n_1+1}, (\alpha_t^\gamma)^c \in \R^{n_2 + 1}$. One can show that for any feasible $\alpha, \beta$, we have that \citep[Prop 3.1]{ot:CuturiPeyre2017COT}
\[\braket{\alpha,a} + \braket{\beta, b} \leq \braket{\alpha^{c\bar{c}} , a} + \braket{\alpha^c , b}.\]
When $C$'s top-left block is the squared Euclidean metric, this problem can be cast as that of computing the \emph{\citeauthor{moreau1965proximite} envelope} of $\alpha$. In a Eulerian setting and when $\XX$ is a finite regular grid which we will consider, we can use either the linear-time Legendre transform or the Parabolic Envelope algorithm~\citep[\S2.2.1,\S2.2.2]{ot:lucet2010shapeCvx} to compute the $C$-transform in linear time with respect to the grid resolution $d$.

Unlike dual iterations, the \emph{primal} iterate $P_t^\gamma$ does \emph{not} belong to the transport polytope $\Pi(a,b)$ after a finite number $t$ of iterations. We use the \texttt{rounding\_to\_feasible} algorithm provided by \citet{ot:altschuler2017nearLinearTime} to compute efficiently a feasible approximation $R_t^\gamma$ of $P_t^\gamma$ that does belong to $\Pi(a,b)$. Putting these two elements together, we obtain
\begin{equation}
	\underbrace{\braket{(\alpha_t^\gamma)^{c\bar{c}}, a} + \braket{(\alpha^\gamma_t)^{c}, b}}_{m_t^\gamma} \leq \bL_C(a,b) \leq \underbrace{\braket{R_t^\gamma, C}}_{M_t^\gamma}.
\label{eq:error_control}
\end{equation}

Therefore, after iterating the Sinkhorn map \eqref{eq:SinkhornMap} $t$ times, we have that if $M^\gamma_t - m^\gamma_t$ is below a certain criterion $\varepsilon$, then we can guarantee that $\braket{R^\gamma_t,C}$ is \emph{a fortiori} an $\varepsilon$-approximation of $\bL_C(a,b)$. Observe that one can also have a relative error control: if one has $M^\gamma_t - m^\gamma_t \leq \varepsilon M^\gamma_t$, then $(1 - \varepsilon) M^\gamma_t \leq L_C(a,b) \leq M^\gamma_t$. Note that $m^\gamma_t$ might be negative but can always be replaced by $\max(m^\gamma_t,0)$ since we know $C$ has non-negative entries (and therefore $\bL_C(a,b) \geq 0$), while $M^\gamma_t$ is always non-negative.
\begin{figure*}[t]
\includegraphics[width=0.32\textwidth]{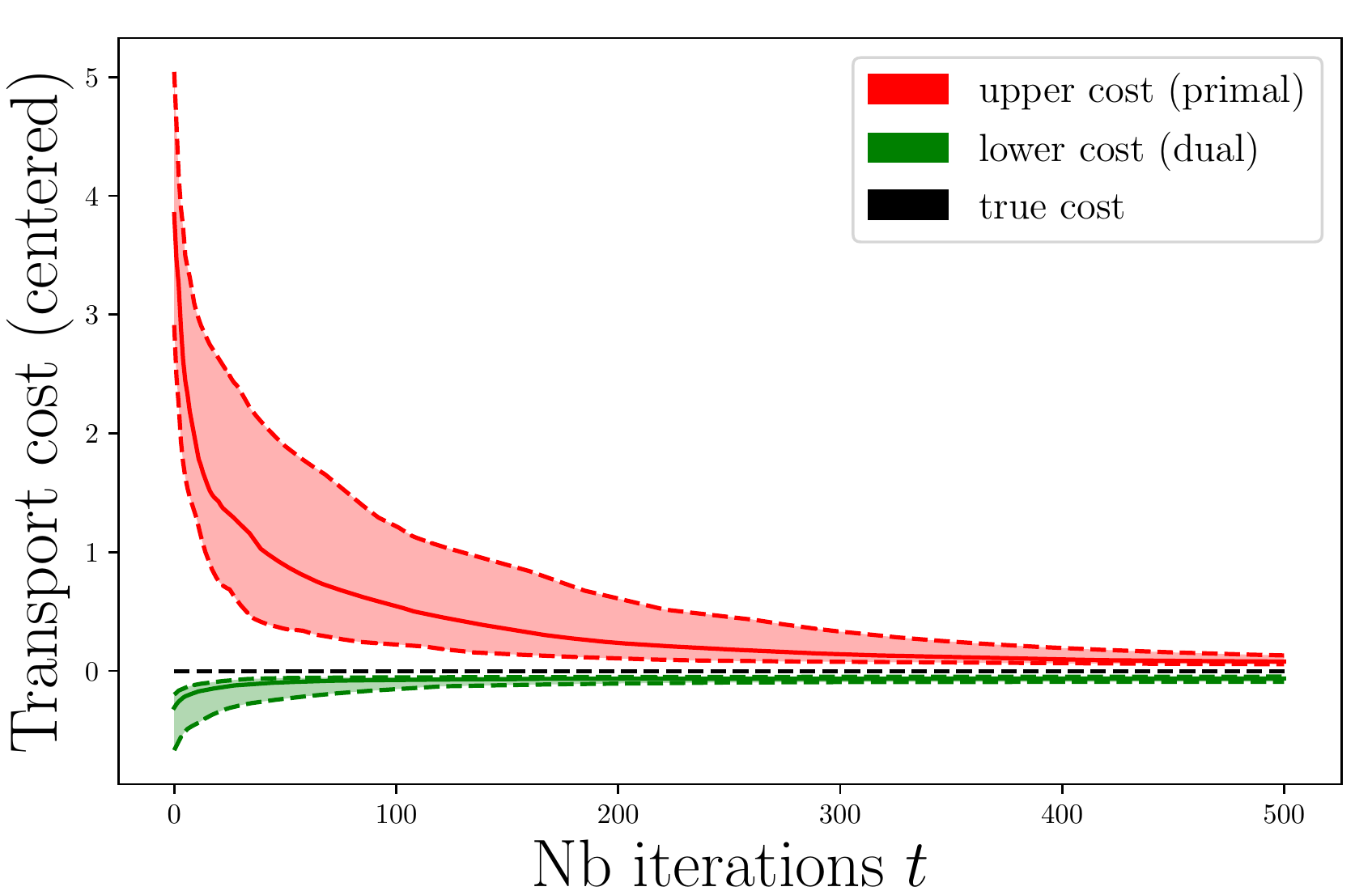}	\includegraphics[width=0.33\textwidth]{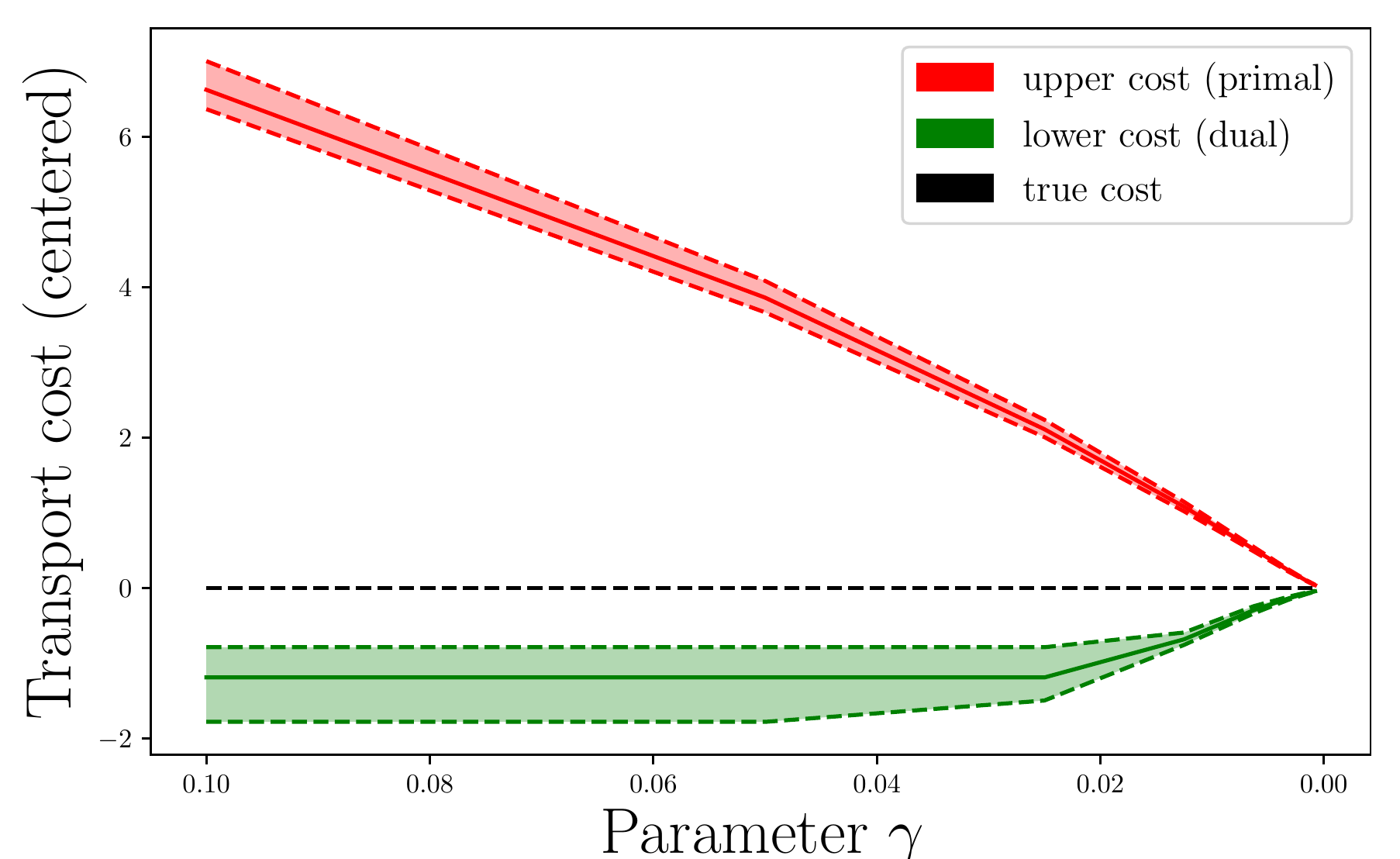}	\includegraphics[width=0.33\textwidth]{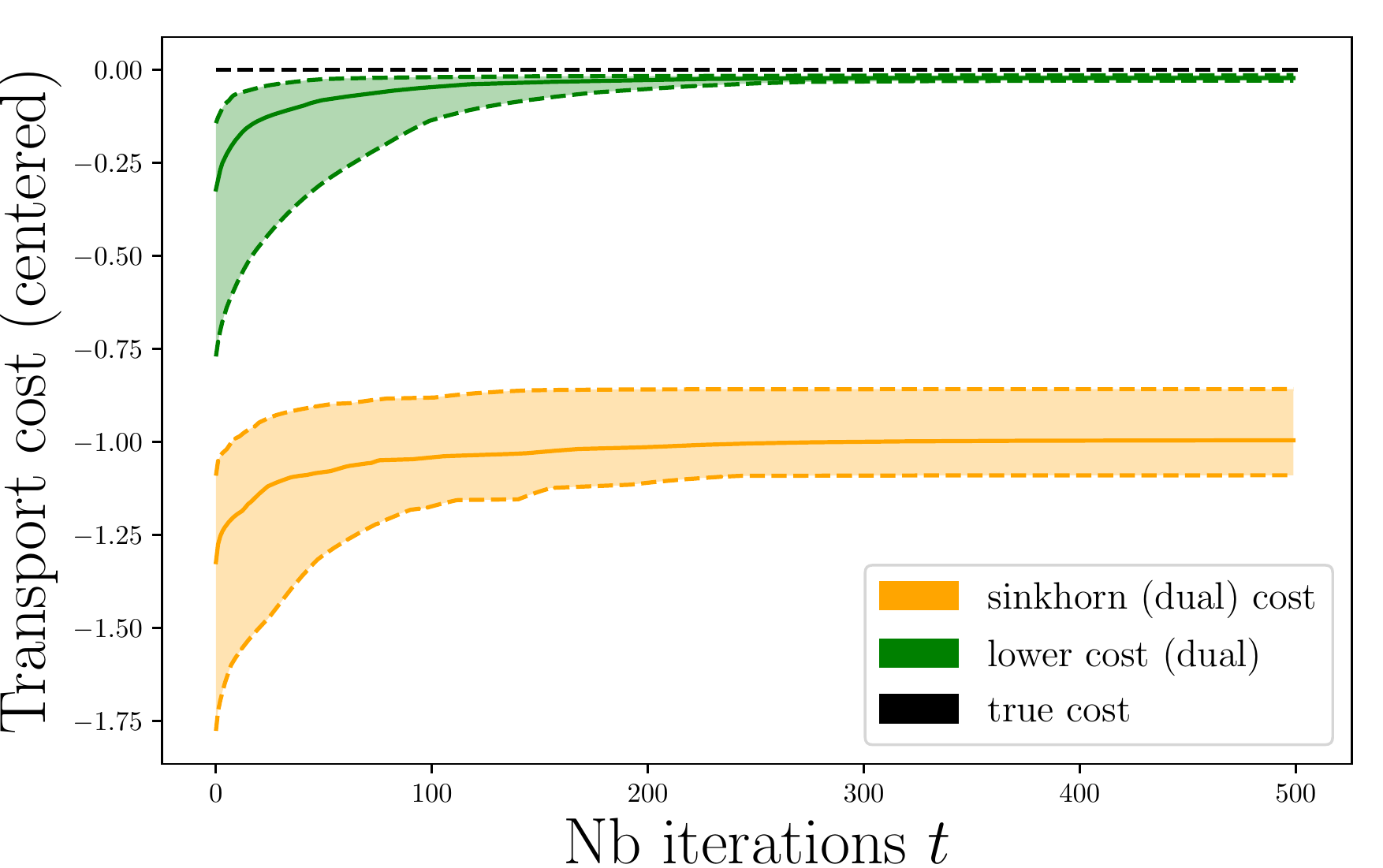}	
\caption{\textit{(Left)} $M^\gamma_t \defeq \braket{R^\gamma_t , C}$ (red) and $m^\gamma_t \defeq \braket{\alpha^{c\bar{c}}_t,a} + \braket{\alpha^{c}_t , b}$ (green) as a function of $t$, the number of iterations of the Sinkhorn map ($t$ ranges from $1$ to $500$, with fixed $\gamma = 10^{-3}$). \textit{(Middle)} Final $M^\gamma$ (red) and $m^\gamma$ (green) provided by Alg.1, computed for decreasing $\gamma$s, ranging from $10^{-1}$ to $5.10^{-4}$. For each value of $\gamma$, Sinkhorn loop is run until $d(P^\gamma_t, \Pi(a,b)) < 10^{-3}$. Note that the $\gamma$-axis is flipped. \textit{(Right)} Influence of $c\bar{c}$-transform for the Sinkhorn dual cost.  \textit{(orange)} The dual cost $\braket{\alpha^\gamma_t,a} + \braket{\beta^\gamma_t,b}$, where $(\alpha^\gamma_t, \beta^\gamma_t)$ are Sinkhorn dual variables (before the $C$-transform). \textit{(green)} Dual cost after $C$-transform, i.e. with $((\alpha^\gamma_t)^{c\bar{c}}, (\alpha^\gamma_t)^{c})$. Experiment run with $\gamma = 10^{-3}$ and $500$ iterations.} \label{fig:c_transform_vs_naive}
\end{figure*}
\vskip-0.3cm
\textbf{Discretization.}
For simplicity, we assume in the remaining that our diagrams have their support in $[0,1]^2 \cap \upperdiag$. From a numerical perspective, encoding persistence diagrams as histograms on the square offers numerous advantages. Given a uniform grid of size $d \times d$ on $[0,1]^2$, we associate to a given diagram $D$ a \emph{matrix-shaped} histogram $\ba \in \R^{d \times d}$ such that $\ba_{ij}$ is the number of points in $D$ belonging to the cell located at position $(i,j)$ in the grid (we transition to bold-faced small letters to insist on the fact that these histograms must be stored as square matrices). To account for the total mass, we add an extra dimension encoding mass on $\{\Delta\}$. We extend the operator $\projop$ to histograms, associating to a histogram $\ba \in \R^{d \times d}$ its total mass on the $(d^2+1)$-th coordinate. One can show that the approximation error resulting from that discretization is bounded above by $ \frac{1}{d} (|D_1|^\frac{1}{p} + |D_2|^\frac{1}{p})$ (see the supplementary material).

\textbf{Convolutions.} In the Eulerian setting, where diagrams are matrix-shaped histograms of size $d \times d=d^2$, the cost matrix $C$ has size $d^2\times d^2$. Since we will use large values of $d$ to have low discretization error (typically $d = 100$), instantiating $C$ is usually intractable. However, \citet{ot:cuturi2015convolutional} showed that for regular grids endowed with a separable cost, each Sinkhorn iteration (as well as other key operations such as evaluating Sinkhorn's divergence $\bS_C^\gamma$) can be performed using Gaussian convolutions, which amounts to performing matrix multiplications of size $d \times d$, without having to manipulate $d^2\times d^2$ matrices. Our framework is slightly different due to the extra dimension $\{\Delta\}$, but we show that equivalent computational properties hold. This observation is crucial from a numerical perspective. 
Our ultimate goal being to efficiently evaluate \eqref{eq:SkMap_iteration_PD}, \eqref{eq:transport_cost_smoothed} and \eqref{eq:upper_bound_sinkhorn_PD}, we provide implementation details.

Let $(\bu,u_\Delta)$ be a pair where $\bu \in \R^{d \times d}$ is a matrix-shaped histogram and $u_\Delta \in \R_+$ is a real number accounting for the mass located on the virtual point $\{\Delta\}$. We denote by $\reshape{\bu}$ the $d^2 \times 1$ column vector obtained when reshaping $\bu$. The $(d^2 + 1) \times (d^2 + 1)$ cost matrix $C$ and corresponding kernel $K$ are given by
		\[
		C = \begin{pmatrix} 
		\widehat{C} & \reshape{\bc_\Delta} \\
		\reshape{\bc_\Delta}^T & 0
		\end{pmatrix}, \quad
		K =  \begin{pmatrix}
		\widehat{K} \defeq e^{- \frac{\widehat{C}}{\gamma}} & \reshape{\bk_\Delta}  \defeq e^{-\frac{\reshape{\bc_\Delta}}{\gamma}} \\
		\reshape{\bk_\Delta}^T & 1			
			\end{pmatrix},
		\]
where $\widehat{C} = (\|(i,i') - (j,j')\|_p^p)_{ii',jj'}$, $\bc_\Delta = (\|(i,i') - \pi_\Delta((i,i'))\|_p^p)_{ii'}$. $C$ and $K$ as defined above will never be instantiated, because we can rely instead on $\bc \in \R^{d \times d}$ defined as $\bc_{ij} = |i - j|^p$ and $\bk = e^{-\frac{\bc}{\gamma}}$. 
\begin{prop}[Iteration of Sinkhorn map]
\label{prop:SkMap_iteration_PD}
The application of $K$ to $(\bu,u_\Delta)$ can be performed as:
\begin{equation} \label{eq:SkMap_iteration_PD}
(\bu, u_\Delta) \mapsto \left( \bk (\bk \bu^T)^T + u_\Delta \bk_\Delta , \braket{\bu,\bk_\Delta} + u_\Delta \right)
\end{equation}
where $\braket{\cdot, \cdot}$ denotes the Froebenius dot-product in $\R^{d \times d}$.
\end{prop}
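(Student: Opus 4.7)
The proof is a direct bookkeeping exercise once one exploits the block structure of $K$ and the separability of the cost on the grid. I would first reduce the problem to showing two things: (i) that applying the top-left block $\widehat{K}$ to the reshaped histogram $\reshape{\bu}$ amounts to the two-sided matrix product $\bk \bu \bk^T$, and (ii) that the border entries account exactly for the interactions with the extra mass $u_\Delta$ on $\{\Delta\}$. The result then follows by recognizing $\bk(\bk \bu^T)^T = \bk\bu\bk^T$ and identifying the scalar coordinate as a Frobenius inner product.

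\textbf{Step 1: block expansion.} I would write
\[
K \begin{pmatrix} \reshape{\bu} \\ u_\Delta \end{pmatrix}
= \begin{pmatrix} \widehat{K}\,\reshape{\bu} + u_\Delta \reshape{\bk_\Delta} \\ \reshape{\bk_\Delta}^T \reshape{\bu} + u_\Delta \end{pmatrix},
\]
so that the claim reduces to identifying each of the two components with the output formula after reshaping the top block back to $\R^{d \times d}$.

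\textbf{Step 2: separability of the kernel.} Because $\widehat{C}_{(i,i'),(j,j')} = |i-j|^p + |i'-j'|^p$, exponentiation entry-wise factorizes:
\[
\widehat{K}_{(i,i'),(j,j')} = e^{-|i-j|^p/\gamma}\, e^{-|i'-j'|^p/\gamma} = \bk_{ij}\,\bk_{i'j'}.
\]
Plugging this into $(\widehat{K}\,\reshape{\bu})_{(i,i')} = \sum_{j,j'} \bk_{ij}\bk_{i'j'}\bu_{jj'}$ and regrouping the summations gives $(\bk\,\bu\,\bk^T)_{ii'}$. Reshaping back yields the $\R^{d\times d}$ matrix $\bk\,\bu\,\bk^T$, which equals $\bk(\bk\bu^T)^T$ (trivial transpose identity), matching the first coordinate of the announced output up to the additive $u_\Delta \bk_\Delta$ term supplied by the border of $K$.

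\textbf{Step 3: scalar coordinate.} The bottom component $\reshape{\bk_\Delta}^T \reshape{\bu} + u_\Delta$ is, by definition of the vectorization and of the Frobenius dot product on $\R^{d \times d}$, equal to $\braket{\bu,\bk_\Delta} + u_\Delta$. Putting the two components together delivers exactly \eqref{eq:SkMap_iteration_PD}. The only genuinely delicate point — and thus the ``hard part'' — is the indexing in Step 2: keeping the row/column conventions consistent so that both $\bk$ appearances really correspond to acting along the two grid axes, and recognizing that the order $\bk(\bk\bu^T)^T$ simply encodes two successive 1D convolutions performed sequentially on the rows then columns of $\bu$, which is what makes the iteration tractable in $O(d^3)$ time without ever instantiating $\widehat{K}$.
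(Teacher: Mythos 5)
Your proof is correct and follows essentially the same route as the paper: expand $K$ by blocks, use separability of the squared-Euclidean cost to factor $\widehat{K}_{(i,i'),(j,j')}=\bk_{ij}\bk_{i'j'}$, regroup the double sum into a two-sided matrix product on the $d\times d$ histogram, and read off the scalar border entry as a Frobenius inner product. The one small cosmetic difference is that you pass through the identity $\bk(\bk\bu^T)^T=\bk\,\bu\,\bk^T$, whereas the paper keeps the nested-transpose form throughout; both are the same computation once one unwinds the indices.
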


We now introduce $\bm \defeq \bk \odot \bc$ and $\bm_\Delta \defeq \bk_\Delta \odot \bc_\Delta$ ($\odot$ denotes term-wise multiplication).

\begin{prop}[Computation of $\bS_C^\gamma$]
\label{prop:CompSkPD}
Let $(\bu, u_\Delta), (\bv, v_\Delta) \in \R^{d \times d + 1}$. The transport cost of $P \defeq \diag(\reshape{\bu},u_\Delta) K \diag(\reshape{\bv}, v_\Delta)$ can be computed as:
\begin{equation}\label{eq:transport_cost_smoothed}
		\braket{\diag(\reshape{\bu}, u_\Delta) K \diag(\reshape{\bv}, v_\Delta) , C}
		= \braket{\diag(\reshape{\bu}) \widehat{K} \diag(\reshape{\bv}) , \widehat{C}}	+ u_\Delta \braket{\bv , \bm_\Delta} + v_\Delta \braket{\bu , \bm_\Delta},
\end{equation}
where the first term can be computed as:
\begin{equation}
\braket{\diag(\reshape{\bu}) \widehat{K} \diag(\reshape{\bv}) , \widehat{C}}= \|\bu \odot \left( \bm (\bk \bv^T)^T + \bk (\bm \bv^T)^T \right) \|_1.
\end{equation}
\end{prop}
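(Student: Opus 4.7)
The strategy is simply to exploit the block decompositions of $C$ and $K$ and the tensor-product (separable) structure of the ``bulk'' block $\widehat{C}$, $\widehat{K}$. I write $P = \diag(\reshape{\bu}, u_\Delta)\, K\, \diag(\reshape{\bv}, v_\Delta)$ and split $\braket{P,C}$ into four contributions corresponding to the four blocks of $C$:
\begin{equation*}
\braket{P,C} = \braket{\diag(\reshape{\bu})\widehat{K}\diag(\reshape{\bv}), \widehat{C}} + \braket{v_\Delta\, \reshape{\bu}\odot \reshape{\bk_\Delta},\, \reshape{\bc_\Delta}} + \braket{u_\Delta\, \reshape{\bv}\odot \reshape{\bk_\Delta},\, \reshape{\bc_\Delta}} + u_\Delta v_\Delta \cdot 1 \cdot 0.
\end{equation*}
The last corner term vanishes since $C_{d^2+1,d^2+1}=0$, and the two middle terms immediately rewrite, using $\bm_\Delta = \bk_\Delta \odot \bc_\Delta$, as $v_\Delta \braket{\bu,\bm_\Delta} + u_\Delta \braket{\bv,\bm_\Delta}$. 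This yields the claimed cross terms with no further work.

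It remains to handle the main block. Here the key observation is that since $\bc_{ij} = |i-j|^p$, the grid cost on $[d]^2 \times [d]^2$ is separable:
\begin{equation*}
\widehat{C}_{ii',jj'} = |i-j|^p + |i'-j'|^p = \bc_{ij} + \bc_{i'j'}, \qquad \widehat{K}_{ii',jj'} = \bk_{ij}\,\bk_{i'j'}.
\end{equation*}
Expanding $\braket{\diag(\reshape{\bu})\widehat{K}\diag(\reshape{\bv}),\widehat{C}} = \sum_{i,i',j,j'} \bu_{ii'} \bk_{ij}\bk_{i'j'} \bv_{jj'} (\bc_{ij}+\bc_{i'j'})$ and using $\bm = \bk\odot\bc$, this sum splits into
\begin{equation*}
\sum_{i,i',j,j'} \bu_{ii'}\, \bm_{ij}\bk_{i'j'}\,\bv_{jj'} \;+\; \sum_{i,i',j,j'} \bu_{ii'}\, \bk_{ij}\bm_{i'j'}\,\bv_{jj'}.
\end{equation*}

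The main bookkeeping step, which is also the easiest place to slip up, is to recognize each of the two inner double sums as a pair of $d\times d$ matrix multiplications. Direct index chasing gives
\begin{equation*}
\sum_{j,j'} \bm_{ij}\bk_{i'j'}\bv_{jj'} = \bigl(\bm (\bk \bv^T)^T\bigr)_{ii'}, \qquad \sum_{j,j'}\bk_{ij}\bm_{i'j'}\bv_{jj'} = \bigl(\bk (\bm \bv^T)^T\bigr)_{ii'},
\end{equation*}
since $(\bk\bv^T)^T_{ji'} = \sum_{j'}\bk_{i'j'}\bv_{jj'}$, and analogously for the second expression. Summing against $\bu_{ii'}$ and using non-negativity of $\bu$ to turn the Frobenius pairing into an $\ell^1$ norm gives
\begin{equation*}
\braket{\diag(\reshape{\bu})\widehat{K}\diag(\reshape{\bv}),\widehat{C}} = \bigl\|\bu \odot \bigl(\bm(\bk\bv^T)^T + \bk(\bm\bv^T)^T\bigr)\bigr\|_1,
\end{equation*}
which combined with the cross-term computation above proves the proposition. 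The main obstacle is just to keep the order of transposes consistent when converting 4-index sums into $d\times d$ matrix products; once done, all remaining steps are bilinear algebra.
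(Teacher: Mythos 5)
Your proof is correct and follows essentially the same route as the paper's: split $\braket{P,C}$ according to the block structure of $C$ and $K$, exploit separability $\widehat{C}_{ii',jj'}=\bc_{ij}+\bc_{i'j'}$ (hence $\widehat{K}_{ii',jj'}=\bk_{ij}\bk_{i'j'}$) to turn the four-index bulk sum into a pair of $d\times d$ matrix products, and write the remaining contraction against $\bu$ as an $\ell^1$ norm. The only difference is ordering (you do the block split first, the paper second) and one minor imprecision: the identity $\braket{\bu, M}=\|\bu\odot M\|_1$ requires non-negativity of both factors, not just of $\bu$; that holds here since $\bm,\bk,\bv$ are all entrywise non-negative, but it is worth stating.
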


Finally, consider two histograms $(\ba,a_\Delta), (\bb, b_\Delta) \in \R^{d \times d} \times \R$,  let $R \in \Pi((\ba,a_\Delta),(\bb,b_\Delta))$ be the rounded matrix of $P$ (see the supplementary material or \citep{ot:altschuler2017nearLinearTime}). Let $r(P), c(P) \in \R^{d \times d} \times \R$ denote the first and second marginal of $P$ respectively. We introduce (using term-wise min and divisions):
\begin{equation*}
X = \min \left( \frac{(\ba, a_\Delta)}{r(P)} , \ones \right) , \quad Y = \min \left( \frac{(\bb, b_\Delta)}{c(\diag(X)P)} , \ones \right),
\end{equation*}
along with $P' = \diag(X) P \diag(Y)$ and the marginal errors:
\begin{equation*}
(e_r, (e_r)_\Delta) = (\ba, a_\Delta) - r(P'), \quad (e_c, (e_c)_\Delta) = (\bb, b_\Delta) - c(P'),
\end{equation*}

\begin{prop}[Computation of upper bound $\braket{R,C}$]
\label{prop:upperBoundSkPD}
The transport cost induced by $R$ can be computed as:
\begin{equation}
\label{eq:upper_bound_sinkhorn_PD}
	\begin{aligned}
		\braket{R,C} = &\braket{\diag(X \odot (\bu,u_\Delta)) K \diag(Y \odot (\bv, v_\Delta)), C} \\
		&+ \frac{1}{\|e_c\|_1 + (e_c)_\Delta} \big(\| e_r^T \bc e_c\|_1 + \|e_r \bc e_c^T \|_1 + (e_c)_\Delta \braket{e_r, \bc_\Delta} + (e_r)_\Delta \braket{e_c, \bc_\Delta} \big).
	\end{aligned}
\end{equation}
Note that the first term can be computed using \eqref{eq:transport_cost_smoothed}
\end{prop}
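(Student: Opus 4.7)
The approach is to (i) write $R$ as $P'$ plus a rank-one correction using \citet{ot:altschuler2017nearLinearTime}'s rounding, (ii) handle the non-rank-one part via Proposition~\ref{prop:CompSkPD}, and (iii) evaluate the rank-one part using the block and separable structure of $C$.

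Following the rounding procedure, I start from $R = P' + \tau^{-1} e_r e_c^T$, with $P' = \diag(X) P \diag(Y)$ and $\tau = \|e_r\|_1 + (e_r)_\Delta = \|e_c\|_1 + (e_c)_\Delta$ the common total excess mass (the equality of the two sums holds because $\ones^T P' \ones$ cancels when one subtracts it from either set of total marginals, and $(\ba,a_\Delta), (\bb,b_\Delta)$ share the same total mass by construction in Proposition~\ref{prop:OTforPD}). By linearity of the Frobenius pairing,
\[
\braket{R,C} = \braket{P',C} + \tau^{-1} \braket{e_r e_c^T, C}.
\]
For the first summand, the diagonal rescalings absorb into $P$, yielding $P' = \diag(X \odot (\reshape{\bu}, u_\Delta))\, K\, \diag(Y \odot (\reshape{\bv}, v_\Delta))$, which falls directly under the hypothesis of Proposition~\ref{prop:CompSkPD} and hence can be evaluated with convolutions on the $d \times d$ grid; this recovers the first summand in the target formula.

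For the second summand, I expand $\braket{e_r e_c^T, C}$ block by block according to the $(d^2 + 1) \times (d^2 + 1)$ structure of $C$, decomposing $e_r, e_c$ into a $d \times d$ grid part and a scalar $\Delta$-coordinate. The $(\Delta,\Delta)$ entry of $C$ vanishes, and the two off-diagonal blocks immediately produce $(e_c)_\Delta \braket{e_r, \bc_\Delta}$ and $(e_r)_\Delta \braket{e_c, \bc_\Delta}$. The remaining grid-grid contribution $\braket{e_r e_c^T, \widehat{C}}$ uses the separable structure $\widehat{C}_{(i,i'),(j,j')} = \bc_{ij} + \bc_{i'j'}$ to split into
\[
\sum_{i,i',j,j'} (e_r)_{ii'}(e_c)_{jj'}\, \bc_{ij} \quad \text{and} \quad \sum_{i,i',j,j'} (e_r)_{ii'}(e_c)_{jj'}\, \bc_{i'j'}.
\]
Factoring out the free indices in each sum and using that $e_r, e_c, \bc$ are entrywise non-negative (non-negativity of the marginal residuals follows from $X, Y \leq \ones$ in the rounding) so that $\ell^1$ norms reduce to totals of entries, I identify the first sum with $\|e_r^T \bc e_c\|_1$ and the second with $\|e_r \bc e_c^T\|_1$. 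Collecting the four contributions and dividing by $\tau$ yields the claimed formula.

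The only delicate point is this last identification: one must track carefully which of the two indices of each reshaped $d \times d$ histogram is being contracted against $\bc$, so that the two sums are matched with $e_r^T \bc e_c$ and $e_r \bc e_c^T$ rather than with transposed or rotated variants. The remainder of the argument is a routine block expansion together with an invocation of Proposition~\ref{prop:CompSkPD} and the Altschuler rounding lemma.
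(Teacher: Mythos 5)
Your proposal is correct and follows essentially the same route as the paper: split $\braket{R,C}$ into the rescaled-plan term (handled by Proposition~\ref{prop:CompSkPD}) plus a rank-one correction, expand the latter block-by-block over the grid/$\Delta$ structure, and use the separability $\widehat{C}_{(i,i'),(j,j')}=\bc_{ij}+\bc_{i'j'}$ to reduce the grid-grid contribution to $\|e_r^T\bc\,e_c\|_1+\|e_r\,\bc\,e_c^T\|_1$. The one point you make explicit that the paper leaves implicit is the non-negativity of $e_r,e_c$ (from $X,Y\le\ones$), which is precisely what justifies rewriting the double sums as $\ell^1$ norms, and your remark that $\|e_r\|_1+(e_r)_\Delta=\|e_c\|_1+(e_c)_\Delta$ because total masses agree is also correct.
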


\textbf{Parallelization and GPU. } Using a Eulerian representation is particularly beneficial when applying Sinkhorn's algorithm, as shown by \citet{ot:cuturi2013sinkhorn}. Indeed, the Sinkhorn map \eqref{eq:SinkhornMap} only involves matrix-vector operations. When dealing with a large number of histograms, concatenating these histograms and running Sinkhorn's iterations in parallel as matrix-matrix product results in significant speedup that can exploit GPGPU to compare a large number of pairs simultaneously. This makes our approach especially well-suited for large sets of persistence diagrams.
\begin{wrapfigure}{r}{0.6\textwidth}
\begin{minipage}{0.6\textwidth}
\begin{algorithm}[H]
   \caption{Sinkhorn divergence for persistence diagrams}
   \label{alg:Sinkhorn_PD}
\begin{algorithmic}
   \STATE {\bfseries Input:} Pairs of PDs $(D_i, D_i')_i$, smoothing parameter $\gamma > 0$, grid step $d \in \N$, stopping criterion, initial $(\bu,\bv)$.
   \STATE {\bfseries Output:} Approximation of all $(d_p(D_i,D_i')^p)_i$, upper and lower bounds if wanted.
   \STATE \textbf{init} Cast $D_i, D_i'$ as histograms $\ba_i$, $\bb_i$ on a $d \times d$ grid
   	\WHILE{stopping criterion not reached}
		\STATE Iterate in parallel \eqref{eq:SinkhornMap} $(\bu,\bv) \mapsto S(\bu,\bv)$ using \eqref{eq:SkMap_iteration_PD}
   \ENDWHILE
	\STATE Compute all $\bS_C^\gamma(\ba_i + \projop \bb_i, \bb_i + \projop \ba_i)$ using \eqref{eq:transport_cost_smoothed}
	\IF{Want a upper bound}
		\STATE Compute $\braket{R_i,C}$ in parallel using \eqref{eq:upper_bound_sinkhorn_PD}
	\ENDIF
	\IF{Want a lower bound}
		\STATE Compute $\braket{(\alpha_t^\gamma)^{c\bar{c}}, \ba_i} + \braket{(\alpha^\gamma_t)^{c}, \bb_i}$ using \citep{ot:lucet2010shapeCvx}
	\ENDIF
\end{algorithmic}
\end{algorithm}
\end{minipage}
\vskip-1.3cm
\end{wrapfigure}

We can now estimate distances between persistence diagrams with Alg.~\ref{alg:Sinkhorn_PD} in parallel by performing only $(d \times d)$-sized matrix multiplications, leading to a computational scaling in $d^3$ where $d$ is the grid resolution parameter. Note that a standard stopping threshold in Sinkhorn iteration process is to check the error to marginals $\dist(P, \Pi(\ba,\bb))$, as motivated by \eqref{eq:Altschuler_bounds}.


\section{Smoothed barycenters for persistence diagrams}
\label{sec:PDB}
\vskip-0.2cm
\textbf{OT formulation for barycenters. } We show in this section that the benefits of entropic regularization also apply to the computation of barycenters of PDs. As the space of PD is not Hilbertian but only a metric space, the natural definition of barycenters is to formulate them as Fréchet means for the $d_p$ metric, as first introduced (for PDs) in \citep{tda:mukherjee2011probabilitymeasure}.
\begin{definition*}
Given a set of persistence diagrams $D_1,\dots,D_N$, a barycenter of $D_1 \dots D_N$ is any solution of the following minimization problem:
\begin{equation}
	\label{eq:barycenter_PD_true}
	\minimize_{\substack{\mu \in \MM_+(\upperdiag})} \EE(\mu) \defeq \sum_{i=1}^{N} \bL_C (\mu + \projop D_i, D_i + \projop \mu)
\end{equation}
where $C$ is defined as in \eqref{eq:cost_matrix_dgm_matching} with $p=2$ (but our approach adapts easily to any finite $p \geq 1$), and $\mathcal{M}_+(\upperdiag)$ denotes the set of non-negative finite measures supported on $\upperdiag$. $\EE(\mu)$ is the \emph{energy} of $\mu$.
\end{definition*}
Let $\widehat{\EE}$ denotes the restriction of $\EE$ to the space of persistence diagrams (finite point measures). \citet{tda:turner2014frechet} proved the existence of minimizers of $\hat{\EE}$ and proposed an algorithm that converges to a local minimum of the functional, using the Hungarian algorithm as a subroutine. Their algorithm will be referred to as the \emph{B-Munkres Algorithm}. The non-convexity of $\widehat{\EE}$ can be a real limitation in practice since $\widehat{\EE}$ can have arbitrarily bad local minima (see Lemma \ref{lemma:turner_bad} in the supplementary material). Note that minimizing $\EE$ instead of $\widehat{\EE}$ will not give strictly better minimizers (see Proposition \ref{thm:PDB_as_OTPB} in the supplementary material). We then apply entropic smoothing to this problem. This relaxation offers differentiability and circumvents both non-convexity and numerical scalability.
\begin{figure}
	\centering
	\includegraphics[width=0.7\textwidth]{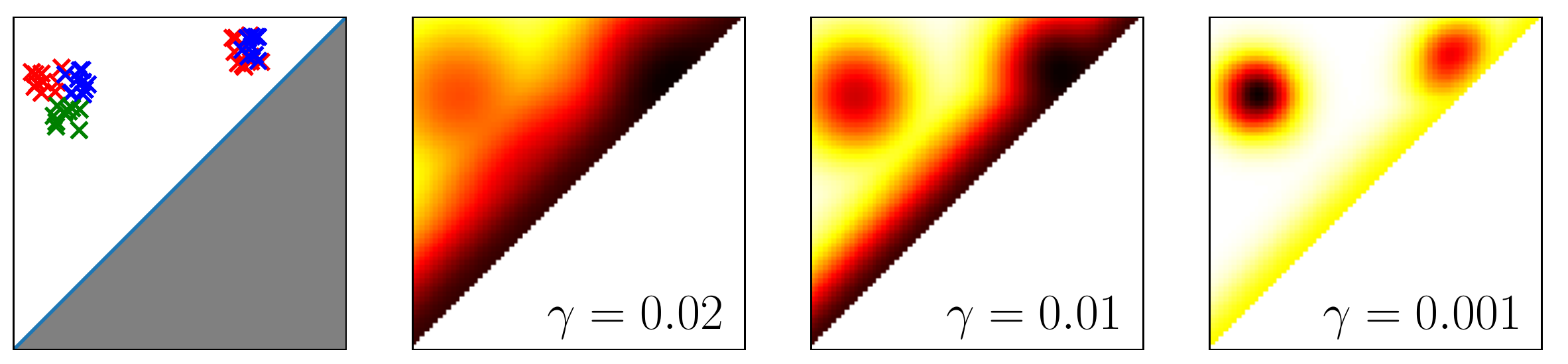}\includegraphics[width=0.3\textwidth]{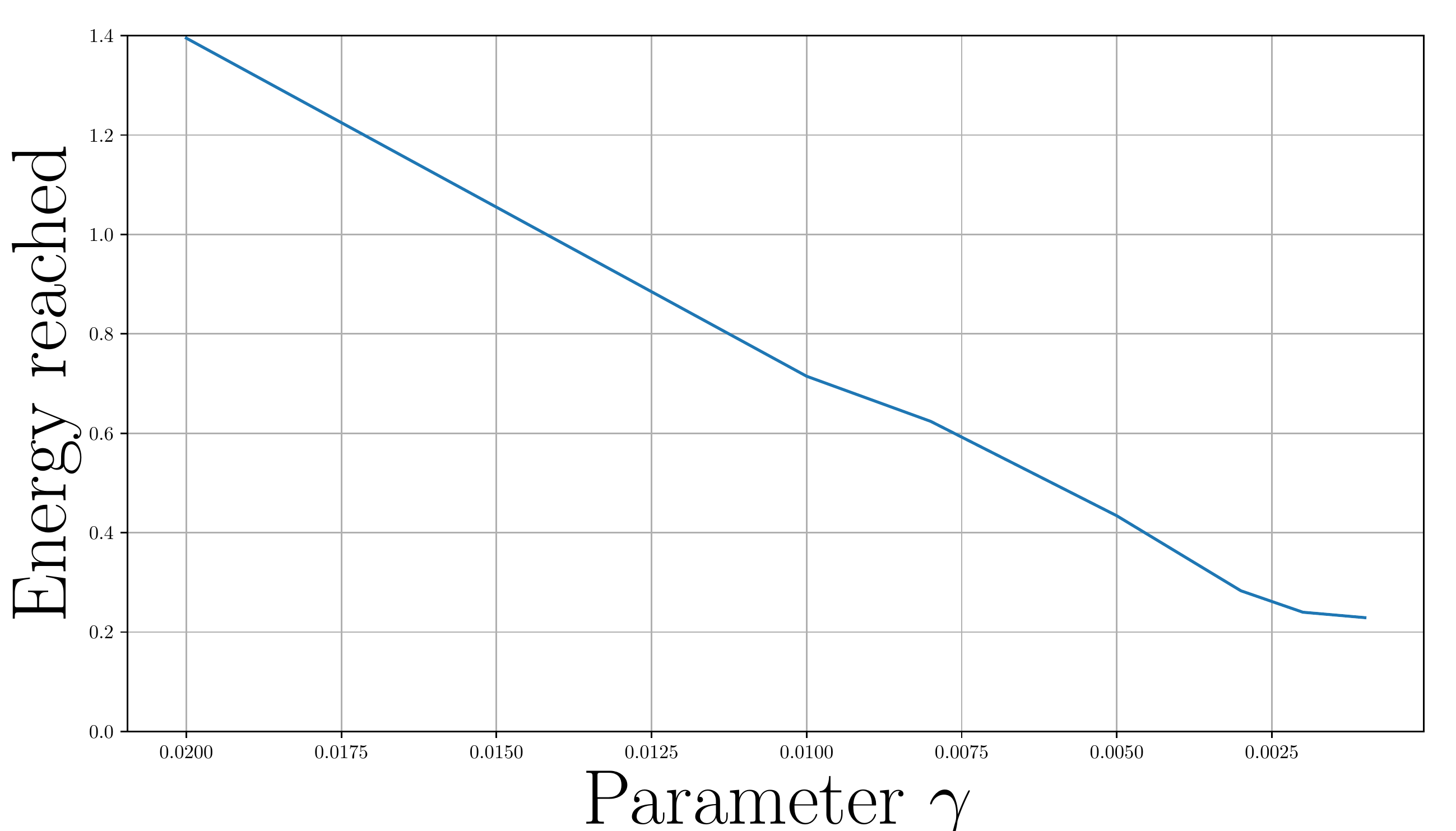}
	\caption{Barycenter estimation for different $\gamma$s with a simple set of $3$ PDs (red, blue and green). The smaller the $\gamma$, the better the estimation ($\EE$ decreases, note the $\gamma$-axis is flipped on the right plot), at the cost of more iterations in Alg.~\ref{alg:Sinkhorn_baryPD}. The mass appearing along the diagonal is a consequence of entropic smoothing: it does not cost much to delete while it increases the entropy of transport plans.}
	\label{fig:bary_gamma_decrease}
	\vspace{-1cm}
\end{figure}

\textbf{Entropic smoothing for PD barycenters.} In addition to numerical efficiency, an advantage of smoothed optimal transport is that $a \mapsto \bL_C^\gamma(a,b)$ is differentiable. In the Eulerian setting, its gradient is given by centering the vector $\gamma \log(u^\gamma)$ where $u^\gamma$ is a fixed point of the Sinkhorn map \eqref{eq:SinkhornMap}, see \citep{ot:cuturi2014fast}. This result can be adapted to our framework, namely:
	\begin{prop} Let $D_1 \dots D_N$ be PDs, and $(\ba_i)_i$ the corresponding histograms on a $d \times d$ grid.
		The gradient of the functional $\EE^\gamma : \bx \mapsto \sum_{i=1}^N \bL_C^\gamma(\bx + \projop \ba_i, \ba_i + \projop \bx)$ is given by
		\begin{equation}
		\nabla_\bx \EE^\gamma = \gamma \left( \sum_{i=1}^N \log(u_i^\gamma) + \projop^T \log(v^\gamma_i) \right)
		\end{equation}
		where $\projop^T$ denotes the adjoint operator $\projop$ and $(u_i^\gamma, v_i^\gamma)$ is a fixed point of the Sinkhorn map obtained while transporting $\bx + \projop \ba_i$ onto $\ba_i + \projop \bx$.
	\end{prop}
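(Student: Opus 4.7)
The plan is to apply the chain rule and invoke the standard envelope-type differentiation formula for smoothed optimal transport due to \citet{ot:cuturi2014fast}, which states that for fixed $b$, the map $a \mapsto \bL_C^\gamma(a,b)$ is differentiable with gradient $\gamma \log u^\gamma$, where $(u^\gamma, v^\gamma)$ is a Sinkhorn fixed point for transporting $a$ onto $b$; symmetrically, the gradient with respect to the second marginal is $\gamma \log v^\gamma$. (These gradients are defined up to an additive constant coming from the mass-preserving simplex constraint, and this ambiguity is what is meant by ``centering'' in the Cuturi–Doucet statement recalled in the paragraph preceding the proposition.)

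First, I would linearize the dependence on $\bx$. Writing $a_i(\bx) \defeq \bx + \projop \ba_i$ and $b_i(\bx) \defeq \ba_i + \projop \bx$, the two maps $\bx \mapsto a_i(\bx)$ and $\bx \mapsto b_i(\bx)$ are affine in $\bx$, with Jacobians equal to the identity and to $\projop$ respectively. So each summand $f_i(\bx) \defeq \bL_C^\gamma(a_i(\bx), b_i(\bx))$ is the composition of a differentiable function with an affine map, and the chain rule yields
\begin{equation*}
\nabla_\bx f_i(\bx) = \bigl(\nabla_1 \bL_C^\gamma\bigr)(a_i(\bx), b_i(\bx)) + \projop^T \bigl(\nabla_2 \bL_C^\gamma\bigr)(a_i(\bx), b_i(\bx)),
\end{equation*}
where $\projop^T$ is the adjoint of $\projop$.

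Next, I would substitute the smoothed OT gradients. By the recalled result, $\nabla_1 \bL_C^\gamma(a_i(\bx), b_i(\bx)) = \gamma \log u_i^\gamma$ and $\nabla_2 \bL_C^\gamma(a_i(\bx), b_i(\bx)) = \gamma \log v_i^\gamma$, where $(u_i^\gamma, v_i^\gamma)$ is the Sinkhorn fixed point obtained while transporting $a_i(\bx)$ onto $b_i(\bx)$, i.e.\ transporting $\bx + \projop \ba_i$ onto $\ba_i + \projop \bx$, which is exactly the one in the statement. Summing over $i$ delivers
\begin{equation*}
\nabla_\bx \EE^\gamma(\bx) = \gamma \sum_{i=1}^N \bigl( \log u_i^\gamma + \projop^T \log v_i^\gamma \bigr),
\end{equation*}
as claimed.

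The only subtle point, and the one I would be most careful about, is the simplex/mass constraint: since $\bL_C^\gamma$ is really only defined on pairs of marginals of equal mass, the dual variables $\log u_i^\gamma$ and $\log v_i^\gamma$ are each defined up to an additive constant, so the gradient of a single term $f_i$ is a priori defined only up to a specific constant direction. What saves us is that here $\bx$ also controls the mass of both marginals symmetrically (the map $\bx \mapsto (a_i(\bx), b_i(\bx))$ preserves the ``equal total mass'' constraint), so the ambiguous constants cancel when the chain rule is applied, giving a genuinely well-defined gradient on the ambient Euclidean space. I would verify this by noting that adding a constant $c$ to $\log u_i^\gamma$ and subtracting the matching $c$ from $\log v_i^\gamma$ (the Sinkhorn invariance) changes the right-hand side by $c(\ones - \projop^T \ones_{\Delta})$, which lies in the invariant subspace annihilated by the constraint $\ba(\bx) - |b(\bx)|$, so this does not affect the gradient restricted to admissible perturbations. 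With this check in place, the chain-rule computation above is the whole proof.
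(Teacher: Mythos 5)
Your proof is correct and is essentially the paper's argument: the envelope-theorem gradient of \citet{ot:cuturi2014fast} for $\bL_C^\gamma$ with respect to each marginal, pushed through the chain rule along the two affine maps $\bx \mapsto \bx + \projop \ba_i$ and $\bx \mapsto \ba_i + \projop \bx$, whose Jacobians are the inclusion and $\projop$ respectively (this is precisely the ``added subtlety'' the paper alludes to). One small tightening of your closing remark: the additive-constant ambiguity cancels outright rather than only on admissible perturbations, since the adjoint $\projop^T$ sends $(y, y_\Delta) \mapsto y_\Delta \ones$, so $\ones - \projop^T \ones = 0$ and the perturbation $c(\ones - \projop^T \ones)$ vanishes identically.
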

\begin{wrapfigure}{r}{0.6\textwidth}
\vskip-0.8cm
\begin{minipage}{0.6\textwidth}
\begin{algorithm}[H]
   \caption{Smoothed approximation of PD barycenter}
   \label{alg:Sinkhorn_baryPD}
\begin{algorithmic}
   \STATE {\bfseries Input:} PDs $D_1 ,\dots, D_N$, learning rate $\lambda$, smoothing parameter $\gamma > 0$, grid step $d \in \N$.
   \STATE {\bfseries Output:} Estimated barycenter $\bx$ 
   \STATE \textbf{Init:} $\bx$ uniform measure above the diagonal.
   \STATE Cast each $D_i$ as an histogram $\ba_i$ on a $d \times d$ grid
   \WHILE{$\bx$ changes}
		\STATE Iterate $S$ defined in \eqref{eq:SinkhornMap} in parallel between all the pairs $(\bx + \projop \ba_i)_i$ and $(\ba_i + \projop \bx)_i$, using \eqref{eq:SkMap_iteration_PD}.
		\STATE $\nabla \defeq \gamma (\sum_i \log(u_i^\gamma) + \projop^T \log(v_i^\gamma))$
		\STATE $\bx \defeq \bx \odot \exp(- \lambda \nabla)$ 
   \ENDWHILE
   \IF{Want energy}
   		\STATE Compute $\frac{1}{N} \sum_i \bS_C^\gamma(\bx + \projop \ba_i, \ba_i + \projop \bx)$ using \eqref{eq:transport_cost_smoothed}
   \ENDIF
   \STATE Return $\bx$
\end{algorithmic}
\end{algorithm}
\end{minipage}
\vskip-0.5cm
\end{wrapfigure}
As in~\citep{ot:cuturi2014fast}, this result follows from the envelope theorem, with the added subtlety that $\bx$ appears in both terms depending on $u$ and $v$. This formula can be exploited to compute barycenters via gradient descent, yielding Algorithm~\ref{alg:Sinkhorn_baryPD}. Following \citep[\S 4.2]{ot:cuturi2014fast}, we used a multiplicative update. This is a particular case of mirror descent \citep{beck2003mirror} and is equivalent to a (Bregman) projected gradient descent on the positive orthant, retaining positive coefficients throughout iterations.

As it can be seen in Fig.~\ref{fig:bary_gamma_decrease}, the barycentric persistence diagrams are smeared. If one wishes to recover more spiked diagrams, quantization and/or entropic sharpening~\citep[\S6.1]{ot:cuturi2015convolutional} can be applied, as well as smaller values for $\gamma$ that impact computational speed or numerical stability. We will consider these extensions in future work.

\textbf{A comparison with linear representations.} When doing statistical analysis with PDs, a standard approach is to transform a diagram into a finite dimensional vector---in a stable way---and then perform statistical analysis and learning with an Euclidean structure. This approach does not preserve the Wasserstein-like geometry of the diagram space and thus loses the algebraic interpretability of PDs. Fig.~\ref{fig:Pim_vs_Pd} gives a qualitative illustration of the difference between Wasserstein barycenters (Fr\'echet mean) of PDs and Euclidean barycenters (linear means) of persistence images \citep{tda:adams2017persistenceImages}, a commonly used vectorization for PDs \citep{tda:makarenko2016textureWithPIM, tda:zeppelzauer2016persImages3Dshapes, tda:obayashi2018persistence}.


\vspace{-0.3cm}
\section{Experiments}\label{sec:expe}
\vskip-0.4cm
All experiments are run with $p=2$, but would work with any finite $p \geq 1$. This choice is consistent with the work of \citet{tda:turner2014frechet} for barycenter estimation.

\begin{wrapfigure}{r}{0.40\textwidth}
\vskip-0.5cm
	\centering
	\includegraphics[width=0.4\textwidth]{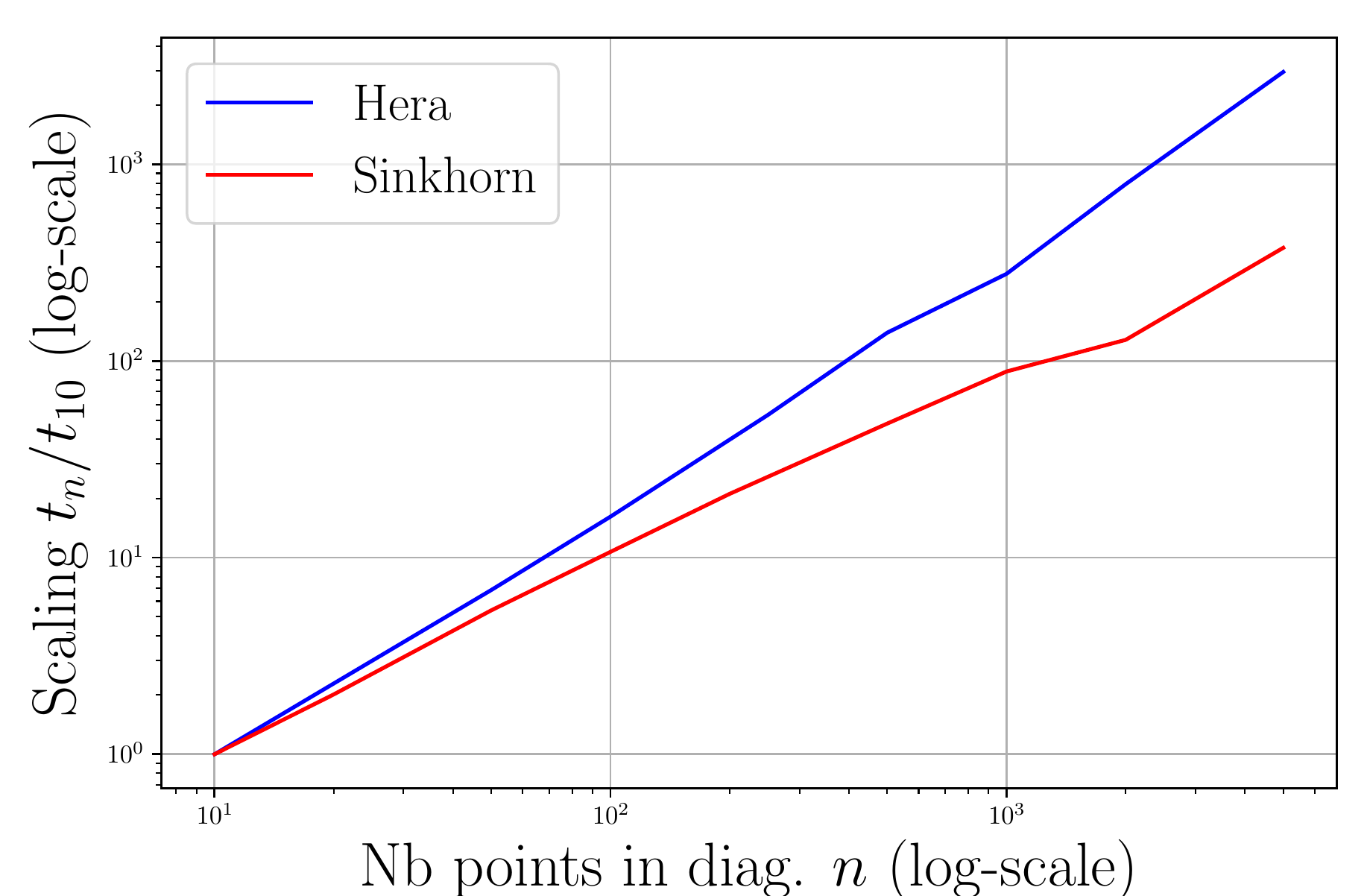}
	\caption{Comparison of scalings of Hera and Sinkhorn (Alg.~\ref{alg:Sinkhorn_PD}) as the number of points in diagram increases. $\log$-$\log$ scale.}	\label{fig:sk_vs_hera}
\vskip-0.3cm
\end{wrapfigure}
\textbf{A large scale approximation.} Iterations of Sinkhorn map \eqref{eq:SinkhornMap} yield a transport cost whose value converges to the true transport cost as $\gamma \to 0$ and the number of iterations $t \to \infty$ \citep{ot:cuturi2013sinkhorn}. We quantify in Fig.~\ref{fig:c_transform_vs_naive} this convergence experimentally using the upper and lower bounds provided in \eqref{eq:error_control} through $t$ and for decreasing $\gamma$. We consider a set of $N = 100$ pairs of diagrams randomly generated with $100$ to $150$ points in each diagrams, and discretized on a $100 \times 100$ grid. We run Alg.~\ref{alg:Sinkhorn_PD} for different $\gamma$ ranging from $10^{-1}$ to $5.10^{-4}$ along with corresponding upper and lower bounds described in \eqref{eq:error_control}. For each pair of diagrams, we center our estimates by removing the true distance, so that the target cost is $0$ across all pairs. We plot median, top $90$\% and bottom $10$\% percentiles for both bounds. Using the $C$-transform provides a much better lower bound in our experiments. This is however inefficient in practice: despite a theoretical complexity linear in the grid size, the sequential structure of the algorithms described in \citep{ot:lucet2010shapeCvx} makes them unsuited for GPGPU to our knowledge.

We then compare the scalability of Alg.~\ref{alg:Sinkhorn_PD} with respect to the number of points in diagrams with that of \citet{tda:kerber2017geometryHelps} which provides a state-of-the-art algorithm with publicly available code---referred to as \emph{Hera}---to estimate distances between diagrams. For both algorithms, we compute the average time $t_n$ to estimate a distance between two random diagrams having from $n$ to $2n$ points where $n$ ranges from $10$ to $5000$. In order to compare their scalability, we plot in Fig.~\ref{fig:sk_vs_hera} the ratio $t_n / t_{10}$ of both algorithms, with $\gamma_n = 10^{-1} / n$ in Alg.~\ref{alg:Sinkhorn_PD}. 

\textbf{Fast barycenters and $k$-means on large PD sets.}
We compare our Alg.~\ref{alg:Sinkhorn_baryPD} (referred to as \emph{Sinkhorn}) to the combinatorial algorithm of \citet{tda:turner2014frechet} (referred to as \emph{B-Munkres}). We use the script \texttt{munkres.py} provided \href{http://sma.epfl.ch/~kturner}{on the website of K.Turner} for their implementation. We record in Fig.~\ref{fig:sk_vs_bmunkres} running times of both algorithms on a set of $10$ diagrams having from $n$ to $2n$ points, $n$ ranging from $1$ to $500$, on Intel Xeon 2.3 GHz (CPU) and P100 (GPU, Sinkhorn only). When running Alg.~\ref{alg:Sinkhorn_baryPD}, the gradient descent is performed until $|\EE(\bx_{t+1})/\EE(\bx_t) - 1| < 0.01$, with $\gamma= 10^{-1} / n$ and $d = 50$. Our experiment shows that Alg.~\ref{alg:Sinkhorn_baryPD} drastically outperforms \emph{B-Munkres} as the number of points $n$ increases. We interrupt \emph{B-Munkres} at $n = 30$, after which computational time becomes an issue.

\begin{wrapfigure}{r}{0.40\textwidth}
\vskip-0.5cm
	\centering	\includegraphics[width=0.4\textwidth]{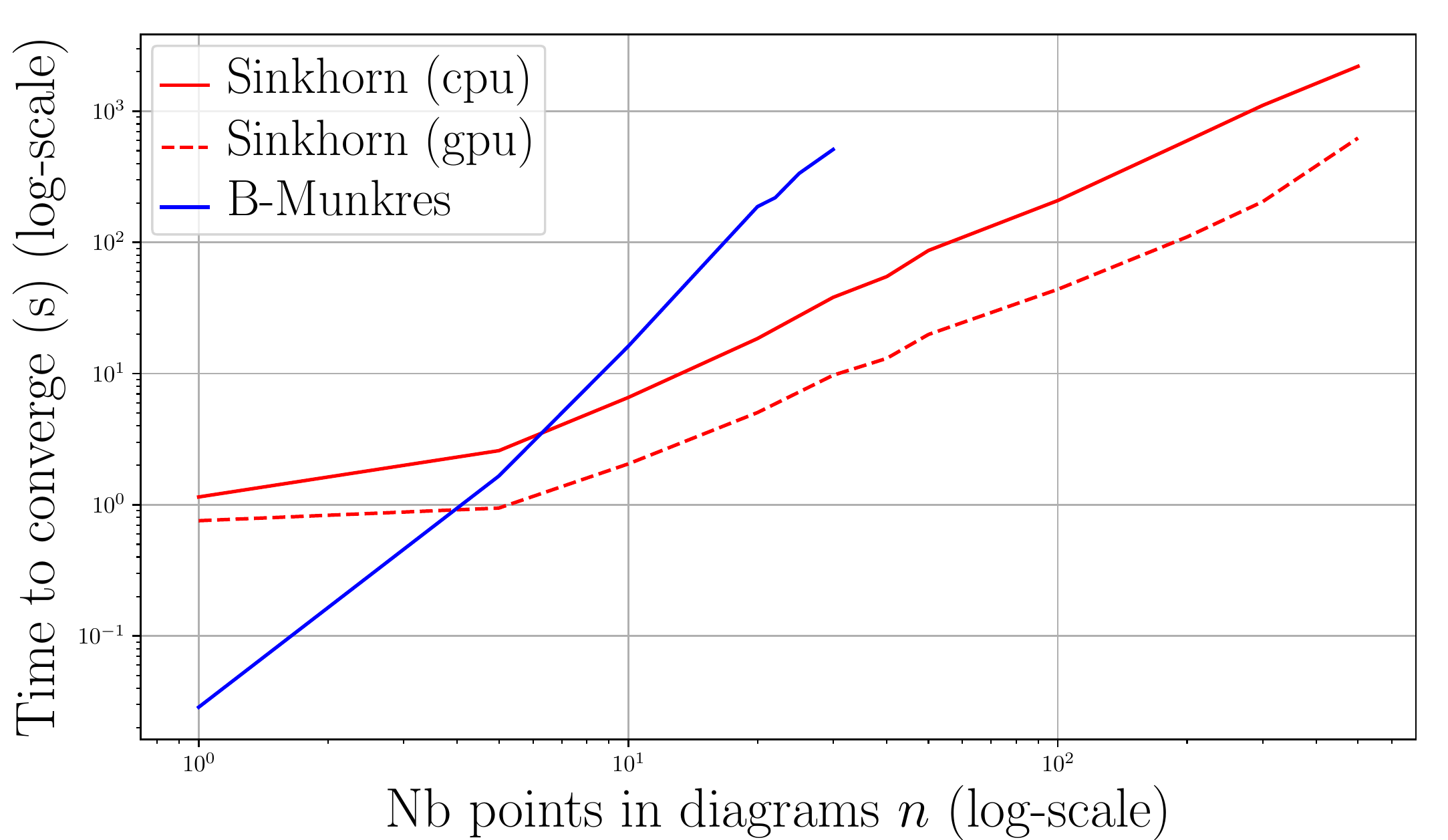}
	\vskip-.1cm
	\caption{Average running times for B-Munkres \emph{(blue)} and Sinkhorn \emph{(red)} algorithms (log-log scale) to average 10 PDs.}
	\label{fig:sk_vs_bmunkres}
	\vskip-0.5cm
\end{wrapfigure}
Aside the computational efficiency, we highlight the benefits of operating with a convex formulation in Fig.~\ref{fig:SinkhornCvx}. Due to non-convexity, the B-Munkres algorithm is only guaranteed to converge to a local minima, and its output depends on initialization. We illustrate on a toy set of $N=3$ diagrams how our algorithm avoids local minima thanks to the Eulerian approach we take.
\begin{figure}
	\center
	\includegraphics[width=0.70\linewidth]{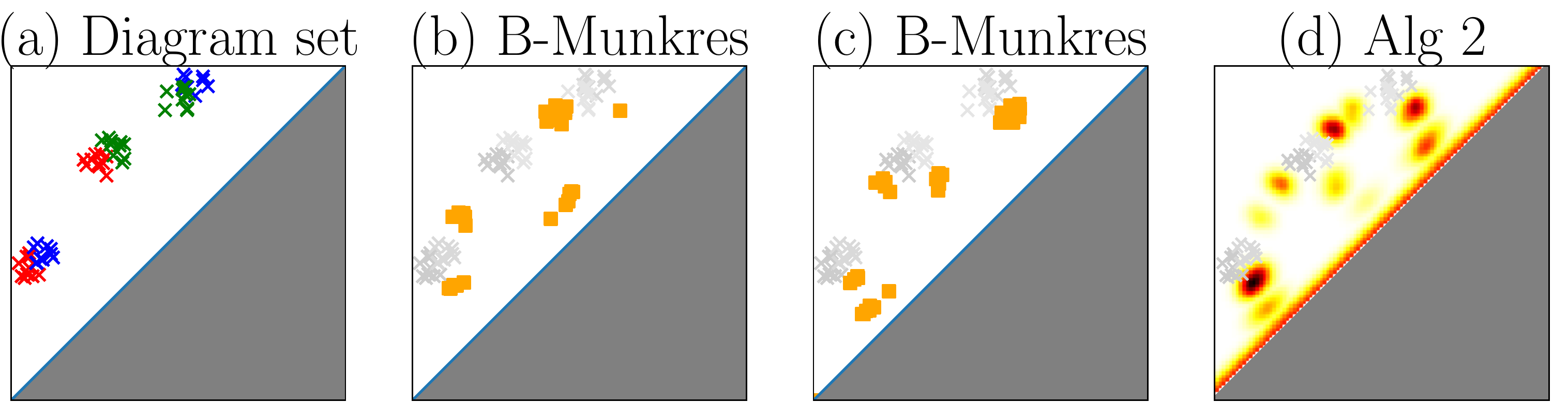}
	\includegraphics[width=0.29\linewidth]{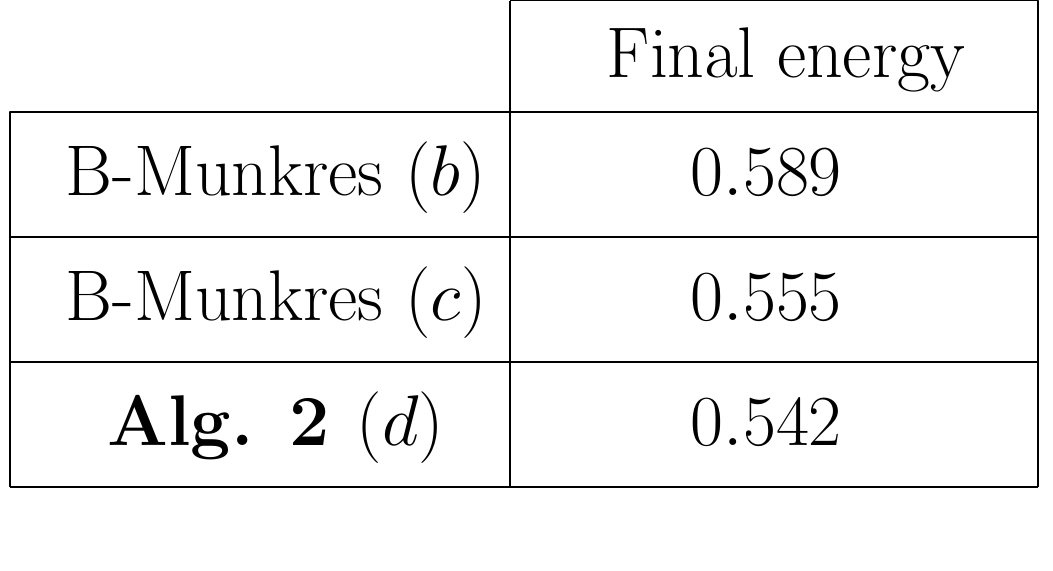}
	\vskip-.2cm
	\caption{Qualitative comparison of B-Munkres and our Alg 2. $(a)$ Input set of $N = 3$ diagrams with $n=20$ points each. $(b)$ Output of B-Munkres when initialized on the blue diagram (orange squares) and input data (grey scale). $(c)$ Output of B-Munkres initialized on the green diagram. $(d)$ Output of Alg.~2 on a $100 \times 100$ grid, $\gamma = 5.10^{-4}$, learning-rate $\lambda=5$, Sinkhorn stopping criterion (distance to marginals): $0.001$, gradient descent performed until $\left|\mathcal{E}(\bf{z}_{t+1})/\mathcal{E}(\bf{z}_t) - 1\right| < 0.01$.---As one can see, localization of masses is similar. Initialization of B-Munkres is made on one of the input diagram as indicated in \citep[Alg.~1]{tda:turner2014frechet}, and leads to convergence to different local minima. Our convex approach (Alg.~2) performs better (lower energy). As a baseline, the energy of the naive arithmetic mean of the three diagrams is $0.72$.}
	\vskip-0.5cm
	\label{fig:SinkhornCvx}
\end{figure}

We now merge Alg.~\ref{alg:Sinkhorn_PD} and Alg.~\ref{alg:Sinkhorn_baryPD} in order to perform unsupervised clustering via $k$-means on PDs. We work with the 3D-shape database provided by \citeauthor{data:sumner2004deformation} and generate diagrams in the same way as in \citep{tda:carriere2015stableSignature3DShape}, working in practice with $5000$ diagrams with $50$ to $100$ points each. The database contains $6$ classes: \texttt{camel}, \texttt{cat}, \texttt{elephant}, \texttt{horse}, \texttt{head} and \texttt{face}. In practice, this unsupervised clustering algorithm detects two main clusters: faces and heads on one hand, camels and horses on the other hand are systematically grouped together. Fig.~\ref{fig:kmeans} illustrates the convergence of our algorithm and the computed centroids for the aforementioned clusters.
\vskip-0.5cm
\begin{figure*}
	\includegraphics[width=0.68\textwidth]{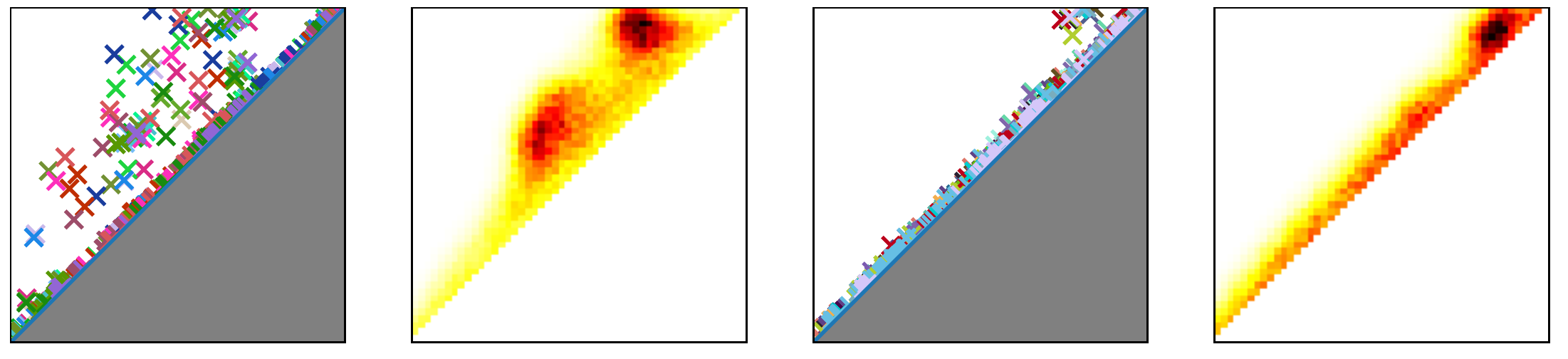}
	\includegraphics[width=0.31\textwidth]{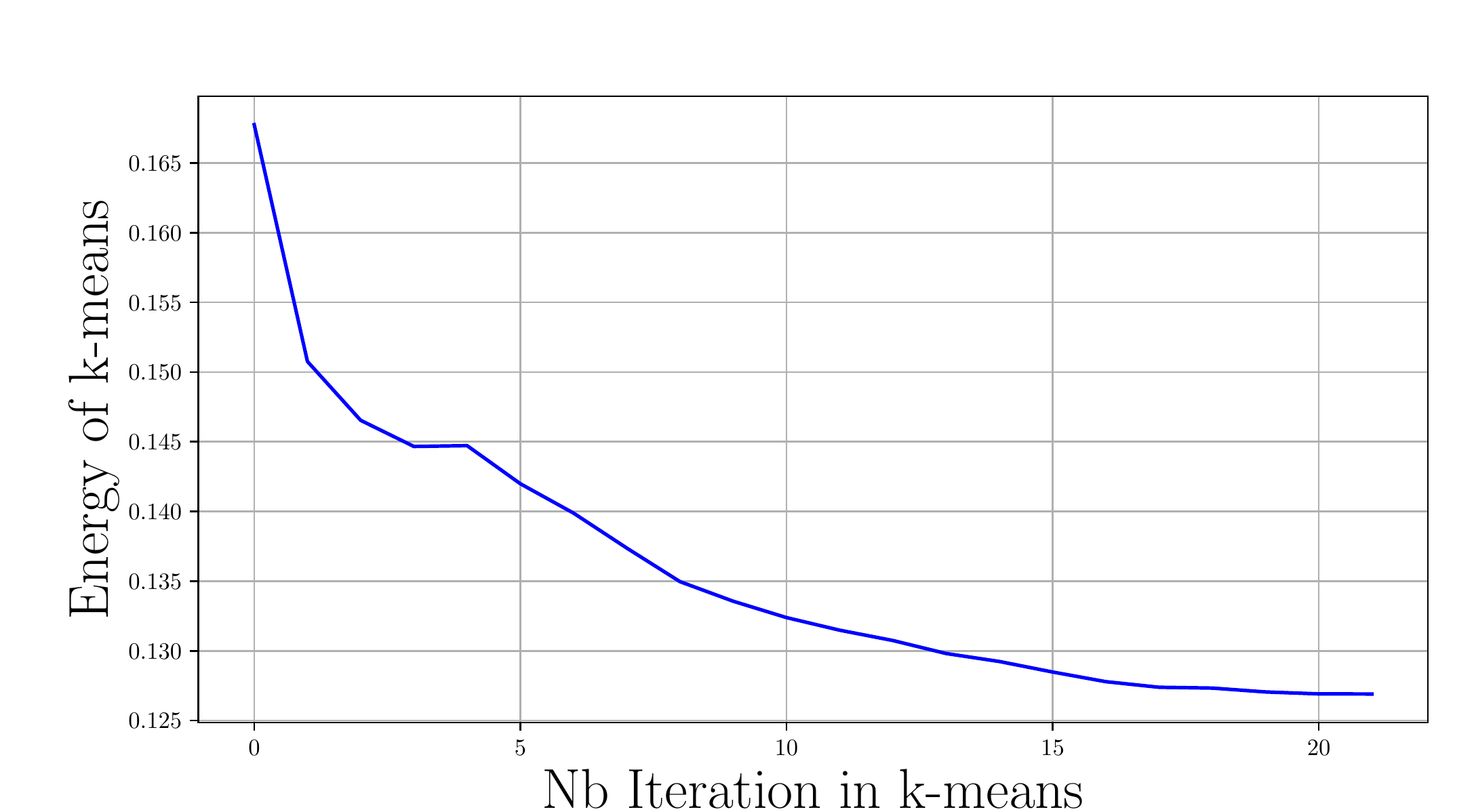}
	\vskip-.2cm
	\caption{Illustration of our $k$-means algorithm. From left to right: $20$ diagrams extracted from \emph{horses} and \emph{camels} plot together (one color for each diagram); the centroid they are matched with provided by our algorithm; 20 diagrams of \emph{head} and \emph{faces}; along with their centroid; decrease of the objective function. Running time depends on many parameters along with the random initialization of $k$-means. As an order of magnitude, it takes from $40$ to $80$ minutes with this $5000$ PD dataset on a P100 GPU.}\label{fig:kmeans}
\vskip-0.6cm
\end{figure*}


\section{Conclusion}
\vskip-0.3cm
In this work, we took advantage of a link between PD metrics and optimal transport to leverage and adapt entropic regularization for persistence diagrams. Our approach relies on matrix manipulations rather than combinatorial computations, providing parallelization and efficient use of GPUs. We provide bounds to control approximation errors. We use these differentiable approximations to estimate barycenters of PDs significantly faster than existing algorithm, and showcase their application by clustering thousand diagrams built from real data. We believe this first step will open the way for new statistical tools for TDA and ambitious data analysis applications of persistence diagrams.


\clearpage
\paragraph{Acknowledgments.} We thank the anonymous reviewers for the fruitful discussion. TL was supported by the AMX, École polytechnique. MC acknowledges the support of a Chaire d’Excellence de l’Idex Paris-Saclay.

\begin{thebibliography}{42}
\providecommand{\natexlab}[1]{#1}
\providecommand{\url}[1]{\texttt{#1}}
\expandafter\ifx\csname urlstyle\endcsname\relax
  \providecommand{\doi}[1]{doi: #1}\else
  \providecommand{\doi}{doi: \begingroup \urlstyle{rm}\Url}\fi

\bibitem[Adams et~al.(2017)Adams, Emerson, Kirby, Neville, Peterson, Shipman,
  Chepushtanova, Hanson, Motta, and
  Ziegelmeier]{tda:adams2017persistenceImages}
Adams, H., Emerson, T., Kirby, M., Neville, R., Peterson, C., Shipman, P.,
  Chepushtanova, S., Hanson, E., Motta, F., and Ziegelmeier, L.
\newblock Persistence images: a stable vector representation of persistent
  homology.
\newblock \emph{Journal of Machine Learning Research}, 18\penalty0
  (8):\penalty0 1--35, 2017.

\bibitem[Agueh \& Carlier(2011)Agueh and Carlier]{ot:agueh2011barycenters}
Agueh, M. and Carlier, G.
\newblock Barycenters in the wasserstein space.
\newblock \emph{SIAM Journal on Mathematical Analysis}, 43\penalty0
  (2):\penalty0 904--924, 2011.

\bibitem[Altschuler et~al.(2017)Altschuler, Weed, and
  Rigollet]{ot:altschuler2017nearLinearTime}
Altschuler, J., Weed, J., and Rigollet, P.
\newblock Near-linear time approximation algorithms for optimal transport via
  sinkhorn iteration.
\newblock In \emph{Advances in Neural Information Processing Systems}, pp.\
  1961--1971, 2017.

\bibitem[Anderes et~al.(2016)Anderes, Borgwardt, and
  Miller]{ot:anderes2016discrete}
Anderes, E., Borgwardt, S., and Miller, J.
\newblock Discrete wasserstein barycenters: optimal transport for discrete
  data.
\newblock \emph{Mathematical Methods of Operations Research}, 84\penalty0
  (2):\penalty0 389--409, 2016.

\bibitem[Beck \& Teboulle(2003)Beck and Teboulle]{beck2003mirror}
Beck, A. and Teboulle, M.
\newblock Mirror descent and nonlinear projected subgradient methods for convex
  optimization.
\newblock \emph{Operations Research Letters}, 31\penalty0 (3):\penalty0
  167--175, 2003.

\bibitem[Benamou et~al.(2015)Benamou, Carlier, Cuturi, Nenna, and
  Peyr{\'e}]{ot:benamou2015iterative}
Benamou, J.-D., Carlier, G., Cuturi, M., Nenna, L., and Peyr{\'e}, G.
\newblock Iterative bregman projections for regularized transportation
  problems.
\newblock \emph{SIAM Journal on Scientific Computing}, 37\penalty0
  (2):\penalty0 A1111--A1138, 2015.

\bibitem[Bubenik(2015)]{tda:bubenik2015statistical}
Bubenik, P.
\newblock Statistical topological data analysis using persistence landscapes.
\newblock \emph{The Journal of Machine Learning Research}, 16\penalty0
  (1):\penalty0 77--102, 2015.

\bibitem[Carlier et~al.(2015)Carlier, Oberman, and
  Oudet]{ot:carlier2015numerical}
Carlier, G., Oberman, A., and Oudet, E.
\newblock Numerical methods for matching for teams and wasserstein barycenters.
\newblock \emph{ESAIM: Mathematical Modelling and Numerical Analysis},
  49\penalty0 (6):\penalty0 1621--1642, 2015.

\bibitem[Carri{\`e}re et~al.(2015)Carri{\`e}re, Oudot, and
  Ovsjanikov]{tda:carriere2015stableSignature3DShape}
Carri{\`e}re, M., Oudot, S.~Y., and Ovsjanikov, M.
\newblock Stable topological signatures for points on 3d shapes.
\newblock In \emph{Computer Graphics Forum}, volume~34, pp.\  1--12. Wiley
  Online Library, 2015.

\bibitem[Carri{\`e}re et~al.(2017)Carri{\`e}re, Cuturi, and
  Oudot]{tda:carriere2017sliced}
Carri{\`e}re, M., Cuturi, M., and Oudot, S.
\newblock Sliced wasserstein kernel for persistence diagrams.
\newblock In \emph{34th International Conference on Machine Learning}, 2017.

\bibitem[Chazal et~al.(2009)Chazal, Cohen-Steiner, Glisse, Guibas, and
  Oudot]{tda:chazal2009proximityThmPD}
Chazal, F., Cohen-Steiner, D., Glisse, M., Guibas, L.~J., and Oudot, S.~Y.
\newblock Proximity of persistence modules and their diagrams.
\newblock In \emph{Proceedings of the twenty-fifth annual symposium on
  Computational geometry}, pp.\  237--246. ACM, 2009.

\bibitem[Chazal et~al.(2014)Chazal, De~Silva, and
  Oudot]{tda:chazal2014persistenceStabilityAdvanced}
Chazal, F., De~Silva, V., and Oudot, S.
\newblock Persistence stability for geometric complexes.
\newblock \emph{Geometriae Dedicata}, 173\penalty0 (1):\penalty0 193--214,
  2014.

\bibitem[Cohen-Steiner et~al.(2007)Cohen-Steiner, Edelsbrunner, and
  Harer]{tda:cohen2007stability}
Cohen-Steiner, D., Edelsbrunner, H., and Harer, J.
\newblock Stability of persistence diagrams.
\newblock \emph{Discrete \& Computational Geometry}, 37\penalty0 (1):\penalty0
  103--120, 2007.

\bibitem[Cuturi(2013)]{ot:cuturi2013sinkhorn}
Cuturi, M.
\newblock Sinkhorn distances: Lightspeed computation of optimal transport.
\newblock In \emph{Advances in Neural Information Processing Systems}, pp.\
  2292--2300, 2013.

\bibitem[Cuturi \& Doucet(2014)Cuturi and Doucet]{ot:cuturi2014fast}
Cuturi, M. and Doucet, A.
\newblock Fast computation of wasserstein barycenters.
\newblock In \emph{International Conference on Machine Learning}, pp.\
  685--693, 2014.

\bibitem[Dvurechensky et~al.(2018)Dvurechensky, Gasnikov, and
  Kroshnin]{ot:Dvurechensky2018boundsSinkhorn}
Dvurechensky, P., Gasnikov, A., and Kroshnin, A.
\newblock Computational optimal transport: Complexity by accelerated gradient
  descent is better than by sinkhorn’s algorithm.
\newblock In Dy, J. and Krause, A. (eds.), \emph{Proceedings of the 35th
  International Conference on Machine Learning}, volume~80 of \emph{Proceedings
  of Machine Learning Research}, pp.\  1367--1376, Stockholmsmässan, Stockholm
  Sweden, 10--15 Jul 2018. PMLR.
\newblock URL \url{http://proceedings.mlr.press/v80/dvurechensky18a.html}.

\bibitem[Edelsbrunner \& Harer(2010)Edelsbrunner and
  Harer]{tda:edelsbrunner2010computational}
Edelsbrunner, H. and Harer, J.
\newblock \emph{Computational topology: an introduction}.
\newblock American Mathematical Soc., 2010.

\bibitem[Edelsbrunner et~al.(2000)Edelsbrunner, Letscher, and
  Zomorodian]{tda:edelsbrunner2000topologicalSeminal}
Edelsbrunner, H., Letscher, D., and Zomorodian, A.
\newblock Topological persistence and simplification.
\newblock In \emph{Foundations of Computer Science, 2000. Proceedings. 41st
  Annual Symposium on}, pp.\  454--463. IEEE, 2000.

\bibitem[Fr{\'e}chet(1948)]{frechet1948elements}
Fr{\'e}chet, M.
\newblock Les {\'e}l{\'e}ments al{\'e}atoires de nature quelconque dans un
  espace distanci{\'e}.
\newblock In \emph{Annales de l'institut Henri Poincar{\'e}}, volume~10, pp.\
  215--310. Presses universitaires de France, 1948.

\bibitem[Guittet(2002)]{ot:guittet2002extended}
Guittet, K.
\newblock \emph{Extended Kantorovich norms: a tool for optimization}.
\newblock PhD thesis, INRIA, 2002.

\bibitem[Hiraoka et~al.(2016)Hiraoka, Nakamura, Hirata, Escolar, Matsue, and
  Nishiura]{hiraoka2016hierarchical}
Hiraoka, Y., Nakamura, T., Hirata, A., Escolar, E.~G., Matsue, K., and
  Nishiura, Y.
\newblock Hierarchical structures of amorphous solids characterized by
  persistent homology.
\newblock \emph{Proceedings of the National Academy of Sciences}, 113\penalty0
  (26):\penalty0 7035--7040, 2016.

\bibitem[Kerber et~al.(2017)Kerber, Morozov, and
  Nigmetov]{tda:kerber2017geometryHelps}
Kerber, M., Morozov, D., and Nigmetov, A.
\newblock Geometry helps to compare persistence diagrams.
\newblock \emph{Journal of Experimental Algorithmics (JEA)}, 22\penalty0
  (1):\penalty0 1--4, 2017.

\bibitem[Li et~al.(2014)Li, Ovsjanikov, and Chazal]{li2014persistence}
Li, C., Ovsjanikov, M., and Chazal, F.
\newblock Persistence-based structural recognition.
\newblock In \emph{Proceedings of the IEEE Conference on Computer Vision and
  Pattern Recognition}, pp.\  1995--2002, 2014.

\bibitem[Lucet(2010)]{ot:lucet2010shapeCvx}
Lucet, Y.
\newblock What shape is your conjugate? a survey of computational convex
  analysis and its applications.
\newblock \emph{SIAM review}, 52\penalty0 (3):\penalty0 505--542, 2010.

\bibitem[Lum et~al.(2013)Lum, Singh, Lehman, Ishkanov, Vejdemo-Johansson,
  Alagappan, Carlsson, and Carlsson]{tda:lum2013NBA}
Lum, P., Singh, G., Lehman, A., Ishkanov, T., Vejdemo-Johansson, M., Alagappan,
  M., Carlsson, J., and Carlsson, G.
\newblock Extracting insights from the shape of complex data using topology.
\newblock \emph{Scientific reports}, 3:\penalty0 1236, 2013.

\bibitem[Makarenko et~al.(2016)Makarenko, Kalimoldayev, Pak, and
  Yessenaliyeva]{tda:makarenko2016textureWithPIM}
Makarenko, N., Kalimoldayev, M., Pak, I., and Yessenaliyeva, A.
\newblock Texture recognition by the methods of topological data analysis.
\newblock \emph{Open Engineering}, 6\penalty0 (1), 2016.

\bibitem[Mileyko et~al.(2011)Mileyko, Mukherjee, and
  Harer]{tda:mukherjee2011probabilitymeasure}
Mileyko, Y., Mukherjee, S., and Harer, J.
\newblock Probability measures on the space of persistence diagrams.
\newblock \emph{Inverse Problems}, 27\penalty0 (12):\penalty0 124007, 2011.

\bibitem[Moreau(1965)]{moreau1965proximite}
Moreau, J.-J.
\newblock Proximit{\'e} et dualit{\'e} dans un espace hilbertien.
\newblock \emph{Bull. Soc. Math. France}, 93\penalty0 (2):\penalty0 273--299,
  1965.

\bibitem[Nicolau et~al.(2011)Nicolau, Levine, and
  Carlsson]{tda:nicolau2011breastCancer}
Nicolau, M., Levine, A.~J., and Carlsson, G.
\newblock Topology based data analysis identifies a subgroup of breast cancers
  with a unique mutational profile and excellent survival.
\newblock \emph{Proceedings of the National Academy of Sciences}, 108\penalty0
  (17):\penalty0 7265--7270, 2011.

\bibitem[Obayashi et~al.(2018)Obayashi, Hiraoka, and
  Kimura]{tda:obayashi2018persistence}
Obayashi, I., Hiraoka, Y., and Kimura, M.
\newblock Persistence diagrams with linear machine learning models.
\newblock \emph{Journal of Applied and Computational Topology}, 1\penalty0
  (3-4):\penalty0 421--449, 2018.

\bibitem[Peyr\'{e} \& Cuturi(2018)Peyr\'{e} and Cuturi]{ot:CuturiPeyre2017COT}
Peyr\'{e}, G. and Cuturi, M.
\newblock {Computational Optimal Transport}, 2018.
\newblock URL \url{http://arxiv.org/abs/1803.00567}.

\bibitem[Reininghaus et~al.(2015)Reininghaus, Huber, Bauer, and
  Kwitt]{tda:reininghaus2015stableKernel}
Reininghaus, J., Huber, S., Bauer, U., and Kwitt, R.
\newblock A stable multi-scale kernel for topological machine learning.
\newblock In \emph{Proceedings of the IEEE conference on computer vision and
  pattern recognition}, pp.\  4741--4748, 2015.

\bibitem[Santambrogio(2015)]{otam}
Santambrogio, F.
\newblock Optimal transport for applied mathematicians.
\newblock \emph{Birk{\"a}user, NY}, 2015.

\bibitem[Schrijver(1998)]{schrijver1998integerProgramming}
Schrijver, A.
\newblock \emph{Theory of linear and integer programming}.
\newblock John Wiley \& Sons, 1998.

\bibitem[Solomon et~al.(2015)Solomon, De~Goes, Peyr{\'e}, Cuturi, Butscher,
  Nguyen, Du, and Guibas]{ot:cuturi2015convolutional}
Solomon, J., De~Goes, F., Peyr{\'e}, G., Cuturi, M., Butscher, A., Nguyen, A.,
  Du, T., and Guibas, L.
\newblock Convolutional wasserstein distances: Efficient optimal transportation
  on geometric domains.
\newblock \emph{ACM Transactions on Graphics (TOG)}, 34\penalty0 (4):\penalty0
  66, 2015.

\bibitem[Sumner \& Popovi{\'c}(2004)Sumner and
  Popovi{\'c}]{data:sumner2004deformation}
Sumner, R.~W. and Popovi{\'c}, J.
\newblock Deformation transfer for triangle meshes.
\newblock In \emph{ACM Transactions on Graphics (TOG)}, volume~23, pp.\
  399--405. ACM, 2004.

\bibitem[Turner(2013)]{tda:turner2013meansAndMedians}
Turner, K.
\newblock Means and medians of sets of persistence diagrams.
\newblock \emph{arXiv preprint arXiv:1307.8300}, 2013.

\bibitem[Turner et~al.(2014)Turner, Mileyko, Mukherjee, and
  Harer]{tda:turner2014frechet}
Turner, K., Mileyko, Y., Mukherjee, S., and Harer, J.
\newblock Fr{\'e}chet means for distributions of persistence diagrams.
\newblock \emph{Discrete \& Computational Geometry}, 52\penalty0 (1):\penalty0
  44--70, 2014.

\bibitem[Villani(2003)]{ot:villani2003topicsInOT}
Villani, C.
\newblock \emph{Topics in optimal transportation}.
\newblock Number~58. American Mathematical Soc., 2003.

\bibitem[Villani(2008)]{ot:villani2008optimal}
Villani, C.
\newblock \emph{Optimal transport: old and new}, volume 338.
\newblock Springer Science \& Business Media, 2008.

\bibitem[Zeppelzauer et~al.(2016)Zeppelzauer, Zieli{\'n}ski, Juda, and
  Seidl]{tda:zeppelzauer2016persImages3Dshapes}
Zeppelzauer, M., Zieli{\'n}ski, B., Juda, M., and Seidl, M.
\newblock Topological descriptors for 3d surface analysis.
\newblock In \emph{International Workshop on Computational Topology in Image
  Context}, pp.\  77--87. Springer, 2016.

\bibitem[Zomorodian \& Carlsson(2005)Zomorodian and
  Carlsson]{tda:zomorodian2005computing}
Zomorodian, A. and Carlsson, G.
\newblock Computing persistent homology.
\newblock \emph{Discrete \& Computational Geometry}, 33\penalty0 (2):\penalty0
  249--274, 2005.

\end{thebibliography}

\clearpage

\section{Supplementary material}
\subsection{Omitted proofs from Section 3}

\paragraph{Diagram metrics as optimal transport:} We recall that we consider $D_1 = \sum_{i=1}^{n_1} \delta_{x_i}$ and $D_2 = \sum_{j=1}^{n_2} \delta_{y_j}$ two persistence diagrams with respectively $n_1$ points $x_1 \dots x_{n_1}$ and $n_2$ points $y_1 \dots y_{n_2}$, $p \geq 1$, and $C$ is the cost matrix with block structure
\begin{equation*} C =  \begin{pmatrix}
	\widehat{C} & u \\
	v^T & 0			
		\end{pmatrix}
\in \R^{(n_1 + 1) \times (n_2 + 1)},
\end{equation*}
\begin{proof}[Proof of Prop. \ref{prop:OTforPD}]
Let $n = n_1 + n_2$ and $\mu = D_1 + \projop D_2, \nu = D_2 + \projop D_1$. Since $\mu,\nu$ are point measures, that is discrete measures of same mass $n$ with integer weights at each point of their support, finding $\inf_{P \in \Pi(\mu,\nu)} \braket{P,C}$ is an assignment problem of size $n$ as introduced in~\S\ref{sec:TDA&OT}. It is equivalent to finding an optimal matching $P \in \Sigma_{n}$ representing some permutation $\sigma \in \perm_{n}$ for the cost matrix $\widetilde{C} \in \R^{n \times n}$ built from $C$ by repeating the last line $u$ in total $n_1$ times, the last column $v$ in total $n_2$ times, and replacing the lower right corner $0$ by a $n_1\times n_2$ matrix of zeros. The optimal $\sigma$ defines a partial matching $\zeta$ between $D_1$ and $D_2$, defined by $(x_i, y_j) \in \zeta$ iff $j = \sigma(i)$, $1 \leq i \leq n_1, 1 \leq j \leq n_2$. Such pairs of points induce a cost $\|x_i - y_j\|^p$, while other points $s \in D_1 \cup D_2$ (referred to as unmatched) induce a cost $\|s - \pi_\Delta (s)\|^p$. Then:

	\[ \begin{aligned}
	\bL_{C}(\mu,\nu) &= \min_{P \in \Sigma_n} \braket{\widetilde{C},P} \\
	&= \min_{\sigma \in \perm_n} \sum_{i=1}^n \widetilde{C}_{i \sigma(i)} \\
	&= \min_{\zeta \in \Gamma(D_1, D_2)} \sum_{(x_i,y_j) \in \zeta} \!\!\!\!\|x_i - y_j\|^p + \!\!\!\!\!\!\!\!\!\!\sum_{ \substack{s \in D_1 \cup D_2 \\ \text{s unmatched by } \zeta}}\!\!\!\!\!\!\!\!\!\! \|s - \pi_\Delta(s) \|^p \\
	&= d_p(D_1, D_2)^p.
	\end{aligned} \]
\end{proof}

\paragraph{Error control due to discretization:}
	Let $D_1,D_2$ be two diagrams and $\ba,\bb$ their respective representations as $d \times d$ histograms. For two histograms, $\mathbf{L}_C(\ba + \projop \bb,\bb + \projop \ba) = d_p(D_1' + \projop D_2' , D_2' + \projop D_1')$ where $D_1', D_2'$ are diagrams deduced from $D_1, D_2$ respectively by moving any mass located at $(x,y) \in \upperdiag \cap [0,1]^2$ to $\left(\frac{\lfloor x d \rfloor}{d} , \frac{\lfloor y d \rfloor}{d} \right)$, inducing at most an error of $\frac{1}{d}$ for each point. We identify $\ba,\bb$ and $D_1', D_2'$ in the following. We recall that $d_p(\cdot,\cdot)$ is a distance over persistence diagrams and thus satisfy triangle inequality, leading to:
	\begin{equation*}
		| d_p(D_1,D_2) - \mathbf{L}_C(\ba + \projop \bb,\bb + \projop \ba)^\frac{1}{p}| \leq d_p(D_1,D_1') + d_p(D_2,D_2')
	\end{equation*}
Thus, the error made is upper bounded by $\frac{1}{d} (|D_1|^\frac{1}{p} + |D_2|^\frac{1}{p})$.
	
\paragraph{Propositions \ref{prop:SkMap_iteration_PD}, \ref{prop:CompSkPD}, \ref{prop:upperBoundSkPD}:}
We keep the same notations as in the core of the article and give details regarding the iteration schemes provided in the paper.
\begin{proof}[Proof of prop \ref{prop:SkMap_iteration_PD}]
Given an histogram $\bu \in \R^{d \times d}$ and a mass $u_\Delta \in \R_+$, one can observe that (see below):
        \begin{equation}
                \widehat{K} \bu = \mathbf{k} (\mathbf{k} \bu^T)^T .
        \label{eq:convol}
        \end{equation}
In particular, the operation $\bu \mapsto \widehat{K} \bu$ can be perform by only manipulating matrices in $\R^{d \times d}$.
        Indeed, observe that:
        \[
                \widehat{K}_{ij,kl} = e^{-(i-k)^2/\gamma} e^{-(j-l)^2/ \gamma} = \mathbf{k}_{ik} \mathbf{k}_{jl},
        \]
        so we have:
        \begin{align*}
                (\widehat{K} \bu)_{i,j}  &= \sum_{k,l} K_{ij,kl} \bu_{k,l} \\
                                        &= \sum_{k,l} \mathbf{k}_{ik} \mathbf{k}_{jl} \bu_{k,l} = \sum_k \mathbf{k}_{ik} \sum_l \mathbf{k}_{jl} \bu_{kl} \\
                                        &= \sum_k \mathbf{k}_{ik} (\mathbf{k} \bu^T)_{jk} = (\mathbf{k} (\mathbf{k} \bu^T)^T)_{i,j}.
        \end{align*}
        
        Thus we have in our case:
        \[
		K ( \bu , u_\Delta ) = (\widehat{K} \bu + u_\Delta \mathbf{k}_\Delta , \braket{\bu, \mathbf{k}_\Delta} + u_\Delta)
	\]
	where $\braket{a,b}$ designs the Froebenius dot product between two histograms $a,b \in \R^{d \times d}$. Note that these computations only involves manipulation of matrices with size $d \times d$.
\end{proof}	

\begin{proof}[Proof of prop \ref{prop:CompSkPD}]
	\begin{align*}
		\braket{\diag(\reshape{\bu}) \widehat{K} \diag(\reshape{\bv}) , \widehat{C}} &= \sum_{ijkl} \bu_{ij} \mathbf{k}_{ik} \mathbf{k}_{jl} [\mathbf{c}_{ik} + \mathbf{c}_{jl}] \bv_{kl} \\
	&= \sum_{ijkl} \bu_{ij} \left( [\mathbf{k}_{ik} \mathbf{c}_{ik} ] \mathbf{k}_{jl}  \bv_{kl} + \mathbf{k}_{ik} [\mathbf{k}_{jl}\mathbf{c}_{jl}] \bv_{kl}\right) \\
	&= \sum_{ij} \bu_{ij} \sum_{kl} \left( \mathbf{m}_{ik} \mathbf{k}_{jl} \bv_{kl} + \mathbf{k}_{ik} \mathbf{m}_{jl} \bv_{kl}\right)
	\end{align*}
	Thus, we finally have:
	\begin{equation*}
		\braket{\diag(\reshape{\bu}) \widehat{K} \diag(\reshape{\bv}) , \widehat{C}} = \|\bu \odot \left( \mathbf{m} (\mathbf{k} \bv^T)^T + \mathbf{k} \mathbf{m} \bv^T]^T \right) \|_1
	\end{equation*}
		
	And finally, taking the $\{\Delta\}$ bin into considerations,
	\begin{align*}
		\braket{\diag(\reshape{\bu} , u_\Delta) K \diag(\reshape{\bv}, v_\Delta) , C} &= \braket{\begin{pmatrix} 
		\diag(\reshape{\bu}) \widehat{K} \diag(\reshape{\bv}) & v_\Delta (\reshape{\bu} \odot \reshape{\mathbf{k}}_\Delta) \\
		u_\Delta (\reshape{\bv}^T \odot \reshape{\mathbf{k}}_\Delta^T) & u_\Delta v_\Delta
		\end{pmatrix} , 
		\begin{pmatrix}
		\widehat{C} & \reshape{\mathbf{c}}_\Delta \\
		\reshape{\mathbf{c}}^T_\Delta & 0
		\end{pmatrix}} \\
		&= \braket{\diag(\reshape{\bu}) \widehat{K} \diag(\reshape{\bv}) , \widehat{C}} + u_\Delta \braket{\bv , \mathbf{k}_\Delta \odot \mathbf{c}_\Delta} + v_\Delta \braket{\bu , \mathbf{k}_\Delta \odot \mathbf{c}_\Delta}
	\end{align*}
	Remark: First term correspond to the cost of effective mapping (point to point) and the two others to the mass mapped to the diagonal.
\end{proof}

To address the last proof, we recall below the \texttt{rounding\_to\_feasible} algorithm introduced by \citeauthor{ot:altschuler2017nearLinearTime}; $r(P)$ and $c(P)$ denotes respectively the first and second marginal of a matrix $P$.
	\begin{algorithm}
\caption{Rounding algorithm of \citet{ot:altschuler2017nearLinearTime}}
\begin{algorithmic}[1]
	\STATE \textbf{Input:} $P \in \R^{d \times d}$, desired marginals $r,c$.
	\STATE \textbf{Output:} $F(P) \in \Pi(r,c)$ close to $P$.
	\STATE $X = \min \left( \frac{r}{r(P)} , 1 \right) \in \R^d$
	\STATE $P' = \diag(X) P$
	\STATE $Y = \min \left( \frac{c}{c(P')} , 1 \right) \in \R^d$
	\STATE $P'' = P' \diag(Y)$
	\STATE $e_r = r - r(P''), e_c = c - c(P'')$
	
	\STATE \textbf{return} $F(P) := P'' + e_r e_c^T / \|e_c\|_1$
\end{algorithmic}
\label{algo:Rounding}
\end{algorithm}

\begin{proof}[Proof of prop \ref{prop:upperBoundSkPD}]

By straightforward computations, the first and second marginals of $P^\gamma_t = \diag(\reshape{\bu}) K \diag(\reshape{\bv})$ are given by:
	\[
		\left(\sum_{kl} \bu_{ij} K_{ij,kl} \bv_{kl}\right)_{ij} = \bu \odot (K \bv), \qquad \left(\sum_{ij} \bu_{ij} K_{ij,kl} \bv_{kl}\right)_{kl} = (\bu K) \odot \bv.
	\]
Observe that $K \bv$ and $\bu K$ can be computed using Proposition \ref{prop:SkMap_iteration_PD}. 

Now, the transport cost computation is:
	\begin{align*}
		\braket{F(P^\gamma_t) , C} &= \braket{\diag(X) P^\gamma_t \diag(Y) , C} + \braket{e_r e_c^T / \|e_c\|_1, C} \\
		&= \braket{\diag(X \odot \bu) K \diag(Y \odot \bv) , C} + \frac{1}{\|e_c\|_1} \sum_{ijkl} (e_r)_{ij} (e_c)_{kl} [\mathbf{c}_{ik} + \mathbf{c}_{jl}]
	\end{align*}
	The first term is the transport cost induced by a rescaling of $\bu,\bv$ and can be computed with Prop \ref{prop:CompSkPD}.
Consider now the second term. Without considering the additional bin $\{\Delta\}$, we have:
	\begin{align*}
		\sum_{ijkl} (e_r)_{ij} (e_c)_{kl} [\mathbf{c}_{ik} + \mathbf{c}_{jl}] &= \sum_{ijl} (e_r)_{ij} \sum_k \mathbf{c}_{ik} (e_c)_{kl} + \sum_{ijk} (e_r)_{ij} \sum_l \mathbf{c}_{jl} (e_c)_{kl} \\
		&= \sum_{ijl} (e_r)_{ij} (\mathbf{c} e_c)_{il} + \sum_{ijk} (e_r)_{ij} (\mathbf{c} e_c^T)_{jk} \\
		&= \| e_r^T \mathbf{c} e_c\|_1 + \|e_r \mathbf{c} e_c^T \|_1,
	\end{align*}
	
	so when we consider our framework (with $\{\Delta\}$), it comes:
	\begin{align*}
	\braket{\begin{pmatrix}
	e_r \\ (e_r)_\Delta
	\end{pmatrix}
	\cdot 
	\begin{pmatrix}
	e_c & (e_c)_\Delta
	\end{pmatrix} , C} &= \braket{ \begin{pmatrix}
										e_r e_c^T & (e_c)_\Delta e_r \\
										(e_r)_\Delta e_c^T & (e_r)_\Delta (e_c)_\Delta
									\end{pmatrix} ,
									\begin{pmatrix}
									\widehat{C} & \reshape{\mathbf{c}}_\Delta \\
									\reshape{\mathbf{c}}^T_\Delta & 0
									\end{pmatrix} } \\
						&= \braket{e_r e_c^T , \widehat{C}} + (e_c)_\Delta \braket{e_r, \mathbf{c}_\Delta} + (e_r)_\Delta \braket{e_c, \mathbf{c}_\Delta}.
	\end{align*}		
Putting things together finally proves the claim.
\end{proof}

\subsection{Omitted proofs from Section 4}
We first observe that $\EE$ does not have local minimum (while $\widehat{\EE}$ does). For $x \in \groundspace$, we extend the Euclidean norm by $\|x - \Delta\|$ the distance from $x$ to its orthogonal projection onto the diagonal $\pi_\Delta(x)$. In particular, $\|\Delta - \Delta\| = 0$. We denote by $c$ the corresponding cost function (continuous analogue of the matrix $C$ defined in \eqref{eq:cost_matrix_dgm_matching}).\footnote{Optimal transport between non-discrete measures was not introduced in the core of this article for the sake of concision. It is a natural extension of notions introduced in \S \ref{sec:TDA&OT} (distances, primal and dual problems, barycenters). We refer the reader to \citep{otam, ot:villani2008optimal} for more details.}

\begin{prop*}[Convexity of $\EE$]
	\label{prop:ConvexityE}
	For any two measures $\mu, \mu' \in \MM_+(\upperdiag)$ and $t \in (0,1)$, we have:
	\begin{equation}
		\EE((1-t) \mu + t \mu') \leq (1-t) \EE(\mu) + t \EE(\mu')
	\end{equation}
\end{prop*}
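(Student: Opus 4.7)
The claim will follow from two elementary observations: that the operator $\projop$ is linear, and that the optimal transport cost $(a,b) \mapsto \bL_c(a,b)$ is jointly convex in its two arguments. Given these, the energy is a finite sum of convex functions of $\mu$, hence convex.

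\textbf{Step 1: Linearity of $\projop$.} Since $\projop$ sends any finite non-negative measure $\mu$ to $|\mu|\delta_\Delta$, and total mass is linear in the measure, $\projop$ is a linear operator on $\MM_+(\upperdiag)$. Consequently, writing $\mu_t \defeq (1-t)\mu + t\mu'$, one has
\begin{align*}
\mu_t + \projop D_i &= (1-t)(\mu + \projop D_i) + t(\mu' + \projop D_i),\\
D_i + \projop \mu_t &= (1-t)(D_i + \projop \mu) + t(D_i + \projop \mu').
\end{align*}

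\textbf{Step 2: Joint convexity of $\bL_c$.} The dual formulation recalled in \eqref{eq:DiscreteTransportProblem} (and its continuous analogue, cf.\ \citep{otam, ot:villani2008optimal}) expresses $\bL_c(a,b)$ as the supremum over the fixed feasible set $\Psi_c$ of the affine functional $(a,b) \mapsto \braket{\alpha,a}+\braket{\beta,b}$. A supremum of affine functionals is convex, so $(a,b) \mapsto \bL_c(a,b)$ is jointly convex. Applying this to the pair of convex combinations from Step 1 yields, for each $i$,
\[
\bL_c(\mu_t + \projop D_i, D_i + \projop \mu_t) \leq (1-t)\,\bL_c(\mu + \projop D_i, D_i + \projop \mu) + t\,\bL_c(\mu' + \projop D_i, D_i + \projop \mu').
\]

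\textbf{Step 3: Conclusion.} Summing the inequality above over $i = 1, \dots, N$ gives exactly $\EE(\mu_t) \leq (1-t)\EE(\mu) + t\EE(\mu')$, proving convexity. The only mildly subtle point — and thus the closest thing to an obstacle — is invoking the dual characterization of $\bL_c$ in the non-discrete setting of $\MM_+(\upperdiag)$; this is standard Kantorovich duality, but we should cite \citep{otam, ot:villani2008optimal} (as the paper already does in its footnote) rather than rely solely on the finite-dimensional dual \eqref{eq:DiscreteTransportProblem} that has been introduced in the main text.
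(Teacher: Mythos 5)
Your proof is correct and follows essentially the same route as the paper's: both rest on the linearity of $\projop$ together with the Kantorovich dual, the latter expressing $\bL_c$ as a supremum of affine functionals of $(a,b)$ over a fixed feasible set. The paper simply carries out the resulting max-of-a-convex-combination estimate explicitly rather than first packaging it as the lemma ``$\bL_c$ is jointly convex,'' and your remark about citing \citep{otam, ot:villani2008optimal} for the non-discrete duality matches the paper's own footnote.
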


\begin{proof} 
	We denote by $\alpha_i, \beta_i$ the dual variables involved when computing the optimal transport plan between $(1-t)\mu + t\mu' + \projop D_i$ and $D_i + \projop ((1-t)\mu + t\mu')$. Note that maximum are taken over the set ${\alpha_i, \beta_i | \alpha_i \oplus \beta_i \leq c}$ (with $\alpha \oplus \beta : (x,y) \mapsto \alpha(x) + \beta(y)$):
	\begin{align*}
		\EE((1-t) \mu + t \mu') &= \frac{1}{n} \sum_{i=1}^{n} \bL_c ((1-t) \mu + t \mu' + \projop D_i, D_i + (1-t) \projop \mu + t \projop \mu') \\
		&= \frac{1}{n} \sum_{i=1}^n \max \{ \braket{\alpha_i, (1-t)\mu + t \mu' + \projop D_i} + \braket{ \beta_i , D_i + (1-t) \projop \mu + t \projop \mu'} \} \\
		&= \frac{1}{n} \sum_{i=1}^n \max \{ (1-t) \left(\braket{\alpha_i,\mu + \projop D_i} + \braket{\beta_i , D_i + \projop \mu} \right) + \\
		&\hspace{1cm} t \left( \braket{\alpha_i, \mu' + \projop D_i} + \braket{ \beta_i , D_i + \projop \mu'} \right) \} \\
		& \leq \frac{1}{n} \sum_{i=1}^n (1-t) \max \left\{ \braket{\alpha_i , \mu + \projop D_i } + \braket{\beta_i , D_i + \projop \mu} \right\} \\
		& \hspace{1cm}+ t \max \left\{ \braket{\alpha_i, \mu' + \projop D_i} + \braket{ \beta_i , D_i + \projop \mu'} \right\} \\
		&= (1-t) \frac{1}{n} \sum_{i=1}^n \bL_c (\mu + \projop D_i , D_i + \projop \mu) + t \frac{1}{n} \sum_{i=1}^n \bL_c (\mu' + \projop D_i, D_i + \projop \mu') \\
		&= (1-t) \EE(\mu) + t \EE(\mu').
	\end{align*}
\end{proof}

\paragraph{Tightness of the relaxation.} The following result states that the minimization problem \eqref{eq:barycenter_PD_true} is a tight relaxation of the problem considered by \citeauthor{tda:turner2014frechet} in sense that global minimizers of $\widehat{\EE}$ (which are, by definition, persistence diagrams) are (global) minimizers of $\EE$.

\begin{prop}
\label{thm:PDB_as_OTPB}
	Let $D_1,\dots,D_N$ be a set of persistence diagrams. Diagram $D_i$ has mass $m_i \in \N$, while $\mtot = \sum m_i$ denotes the total mass of the dataset. Consider the \emph{normalized dataset} $\widehat{D_1} ,\dots, \widehat{D_N}$ defined by $\widehat{D_i} \defeq D_i + (\mtot - m_i) \delta_\Delta$. Then the functional
	\begin{equation}
	\label{eq:equivalent_proba}
	\begin{aligned}
		 \GG : \mu &\mapsto \frac{1}{N} \sum_{i=1}^{N} \bL_c (\mu + (\mtot - |\mu|) \delta_\Delta, \widehat{D_i})
	\end{aligned}
	\end{equation}
where $\mu \in \{ \MM_+(\upperdiag) : \max_i m_i \leq |\mu| \leq \mtot\}$ has the same minimizers as~\eqref{eq:barycenter_PD_true}.
\end{prop}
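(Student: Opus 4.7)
The two functionals $\EE$ and $N\,\GG$ encode the same underlying matching problem: how to transport $\mu$ onto each $D_i$ using the diagonal as a sink. They differ only in how much auxiliary mass they pad at $\Delta$ on each side of the transport. Since $\Delta\to\Delta$ transport is free, such slack mass should be irrelevant as long as enough of it is available to absorb every unmatched atom. The proof I would write first formalizes this ``slack invariance'' as a lemma, then uses it to identify a common sub-domain where $\EE = N\,\GG$ holds, and finally shows that this sub-domain contains at least one minimizer of $\EE$.

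\textbf{Step 1 (Slack-at-the-diagonal lemma).} For $\mu,\nu \in \MM_+(\upperdiag)$ and nonnegative $\alpha,\beta$ with $\alpha + |\mu| = \beta + |\nu|$, I would prove that $\bL_c(\mu + \alpha\delta_\Delta, \nu + \beta\delta_\Delta)$ is non-increasing as $\alpha$ (hence $\beta$) grows, and is constant, equal to the canonical partial-matching cost $d_c(\mu,\nu)$, as soon as $\alpha \geq |\nu|$ (equivalently $\beta \geq |\mu|$). The argument decomposes any plan into four flows---source-$\upperdiag$ to target-$\upperdiag$, source-$\upperdiag$ to $\Delta$, $\Delta$ to target-$\upperdiag$, and $(\Delta,\Delta)$---and observes that one can freely pump or drain mass through the $(\Delta,\Delta)$ flow without altering the other three, subject only to marginal feasibility.

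\textbf{Step 2 (Coincidence on a sub-domain).} Term-by-term: the $i$-th summand of $\EE$ uses $\alpha = m_i = |D_i|$ and so is exactly at the boundary of the constant regime, yielding $\EE(\mu) = \sum_i d_c(\mu, D_i)$; the $i$-th summand of $N\,\GG(\mu)$ uses $\alpha = \mtot - |\mu|$ and is in the constant regime iff $|\mu| \leq \mtot - m_i$. Combining across $i$, $\EE(\mu) = N\,\GG(\mu)$ on $R_0 := \{\mu : |\mu| \leq \mtot - \max_i m_i\}$, and $\EE(\mu) \leq N\,\GG(\mu)$ elsewhere on the $\GG$-domain by the monotonicity part of Step 1.

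\textbf{Step 3 (Matching argmins; main obstacle).} The proposition then follows from a simple sandwich, provided one exhibits $\mu^\star \in \arg\min_{\MM_+} \EE$ with $|\mu^\star| \in [\max_i m_i,\, \mtot - \max_i m_i]$, so that $\mu^\star$ lies in both the $\GG$-domain and $R_0$ and hence $\EE(\mu^\star) = N\,\GG(\mu^\star)$. Indeed, for every $\nu$ in the $\GG$-domain one then has $N\,\GG(\nu) \geq \EE(\nu) \geq \EE(\mu^\star) = N\,\GG(\mu^\star)$, so $\mu^\star \in \arg\min \GG$; and for any $\mu_B \in \arg\min \GG$, $\EE(\mu_B) \leq N\,\GG(\mu_B) = N\,\GG(\mu^\star) = \min \EE$, so $\mu_B \in \arg\min \EE$. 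The main obstacle is producing such a $\mu^\star$: informally, any atom of $\mu$ that the $N$ optimal plans unanimously route to $\Delta$ can be deleted at strict benefit (bounding $|\mu^\star|$ above), while if $|\mu^\star| < \max_i m_i$ then in the optimal plan against $D_j$ with $j = \arg\max_i m_i$ some atoms of $D_j$ must be forced to the diagonal and can be re-used to augment $\mu^\star$ without penalty (bounding $|\mu^\star|$ below). Combined with the convexity of $\EE$ proved just above in the supplementary material, these two pincer arguments should pin $|\mu^\star|$ into the required interval.
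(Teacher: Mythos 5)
Your Step 1 lemma (slack invariance at the diagonal) and the resulting inequality $\EE(\mu) \leq N\GG(\mu)$ on the $\GG$-domain are correct, and the sandwich strategy of Step 3 is sound in outline. The gap is in Step 3 itself: the interval $[\max_i m_i,\ \mtot - \max_i m_i]$ can be empty. It is non-empty only when $2\max_i m_i \leq \mtot$, which fails as soon as one diagram carries more than half the total mass (for instance $N=2$, $m_1=2$, $m_2=1$: then $\mtot=3$ and the putative interval is $[2,1]$). In such cases $R_0$ and the $\GG$-domain are disjoint, so no $\mu^\star$ of the required type exists and the sandwich cannot be closed. The deeper issue is that $R_0$ is genuinely too small: your Step 1 gives a \emph{sufficient} condition for $\EE=N\GG$, but minimizers can satisfy $\EE(\mu^\star)=N\GG(\mu^\star)$ with $|\mu^\star|>\mtot-\max_i m_i$, because the slack $\mtot-|\mu^\star|$ at $\Delta$ on the source side may simply never be used by the optimal plans. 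The fact that actually drives the equality is structural rather than a bound on $|\mu|$: at a minimizer, no mass of $D_i$ is fed from $\Delta$.

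The paper circumvents your domain problem by introducing an intermediate functional $\FF(\mu) \defeq \tfrac{1}{N}\sum_i \bL_c\bigl(\mu,\ D_i+(|\mu|-m_i)\delta_\Delta\bigr)$ on $\{|\mu|\geq\max_i m_i\}$, which carries \emph{no} slack on the source diagonal and hence dominates both $\EE$ and $\GG$ pointwise (this is your monotonicity lemma applied twice). It then shows, via the same pincer you sketch, that at a minimizer of $\EE$ (respectively of $\GG$) the optimal plans route all remaining slack at $\Delta$ to $\Delta$, so $\FF$ agrees with $\EE$ (respectively with $\GG$) at that minimizer; two short sandwiches against the common upper envelope $\FF$ give $\argmin\EE=\argmin\FF=\argmin\GG$. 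Conceptually this is the same ``delete the irrelevant slack'' idea you are using, but comparing each of $\EE$ and $\GG$ to $\FF$ only requires the one-sided bound $|\mu^\star|\geq\max_i m_i$ — which the pincer does deliver — and never the upper bound $|\mu^\star|\leq\mtot-\max_i m_i$, which is not in general attainable. Repairing your proof would force you to replace $R_0$ by a plan-dependent set and prove a minimizer lands in it, at which point you would be reproducing the paper's argument.
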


This allows to apply known results from OT theory, linear programming, and integrality of solutions of LPs with totally unimodular constraint matrices and integral constraint vectors \citep{schrijver1998integerProgramming}, which provides results on the tightness of our relaxation.

\begin{corollary*}[Properties of barycenters for PDs]
	Let $\mu^*$ be a minimizer of \eqref{eq:barycenter_PD_true}. Then $\mu^*$ satisfies:
	\begin{enumerate}[label=(\roman*)]
		\item \citep{ot:carlier2015numerical} Localization: $x \in \supp(\mu^*) \Rightarrow x$ minimizes $z \mapsto \sum_{i=1}^n \|x_i - z\|_2^2$ for some $x_i \in \supp(\widehat{D_i})$. This function admit a unique minimizer in $\groundspace$, thus the support of $\mu^*$ is discrete.
		\item $\GG$ admits persistence diagrams (that is point measures) as minimizers (so does $\EE$).
	\end{enumerate}
\end{corollary*}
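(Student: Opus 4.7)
The plan is to exploit the equivalent formulation of Proposition \ref{thm:PDB_as_OTPB}: minimizing $\EE$ is equivalent to minimizing $\GG$, so one may work with a standard Wasserstein barycenter problem (squared cost on $\groundspace$) for the normalized diagrams $\widehat{D_1}, \ldots, \widehat{D_N}$, each of common mass $\mtot$. This reduces both claims to facts about optimal transport with discrete, integer-valued marginals.

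\textbf{Part (i).} The idea is to invoke directly the localization theorem of \citet{ot:carlier2015numerical}: for Wasserstein barycenters with squared Euclidean cost, every point $y$ in the support of a barycenter of $\nu_1, \ldots, \nu_N$ is the unique minimizer of $z \mapsto \sum_i \|z - x_i\|^2$ for some choice of $x_i \in \supp(\nu_i)$. In our setting the cost is extended to $\Delta$ via $\|x - \Delta\| \defeq \|x - \pi_\Delta(x)\|$, but $z \mapsto \sum_i \|z - x_i\|^2$ remains strictly convex (once contributions involving $\Delta$ are replaced by the smooth analogue $\|z - \pi_\Delta(z)\|^2$), hence admits a unique minimizer in $\groundspace$. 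Since each $\widehat{D_i}$ has finite support, only finitely many tuples $(x_1, \ldots, x_N)$ can arise, so $\supp(\mu^*)$ is contained in a finite set, proving (i).

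\textbf{Part (ii).} By (i), any minimizer of $\GG$ is supported on the finite set $S \subset \groundspace$ of arithmetic means of tuples drawn from $\supp(\widehat{D_1}) \times \cdots \times \supp(\widehat{D_N})$. Searching for optimal weights on $S$, together with transport plans $P_i$ from $\mu$ to $\widehat{D_i}$, can then be cast as a single linear program: the unknowns are the weights $w \in \R_+^{|S|}$ of $\mu$ together with the entries of each $P_i$; the constraints are the marginal equations $P_i \ones = w$ (after adding a bin for $\Delta$-mass) and $P_i^\top \ones = b_i$, where $b_i \in \N^{n_i+1}$ are the (integer) weight vectors of $\widehat{D_i}$. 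The combined constraints form a transportation-type system with integer right-hand side, whose constraint matrix is totally unimodular \citep{schrijver1998integerProgramming}. Hence the LP admits an integer-valued optimal vertex, which is a sum of diracs with integer multiplicities on $S$---that is, a persistence diagram (plus mass on $\Delta$). Combined with Proposition \ref{thm:PDB_as_OTPB}, this shows $\EE$ too attains its minimum at a point measure.

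\textbf{Main obstacle.} The delicate step is the LP reformulation in (ii): because the weights $w$ of $\mu$ play the role of a \emph{common} first marginal for all of the $N$ transport plans $P_i$, it is not immediate that the combined constraint matrix is totally unimodular. The cleanest way to handle this is to eliminate $w$ by substitution (say $w = P_1 \ones$) and verify that the resulting system, coupling $P_1$ with each $P_i$ via $P_1 \ones = P_i \ones$, still has an incidence-matrix structure and therefore preserves total unimodularity; equivalently, one appeals to integrality results for multi-marginal transportation polytopes on bipartite supports. Once this combinatorial point is settled, the two claims follow directly.
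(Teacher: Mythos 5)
Your argument follows the same two-step route as the paper's: (i) invoke the localization result of \citet{ot:carlier2015numerical} to confine the barycenter's support to a finite set $S$ of candidate means, and (ii) cast the problem over that fixed support as a linear program and appeal to total unimodularity together with the integer right-hand sides $b_i$ to produce an integral optimizer. The only substantive difference is the LP parametrization: the paper works directly with the multi-marginal coupling $w$ over tuples in $\prod_i \supp(\widehat{D_i})$, subject to $A_i w = b_i$, whereas you keep the barycenter weights and $N$ couplings $P_i$ explicit, linked by $P_i\ones = w$ and $P_i^{\top}\ones = b_i$. Your ``main obstacle'' paragraph correctly pinpoints the step that the paper compresses into a single sentence: total unimodularity of each $A_i$ separately (or of each standalone transportation block) does not automatically carry over to the concatenated system---indeed the generic multi-marginal transportation polytope already has fractional vertices for $N \geq 3$. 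Your proposed resolutions (eliminate $w$ by substitution and check the incidence structure, or invoke multi-marginal integrality) are left as sketches; to be made rigorous they would need to exploit the special quadratic/Monge-type structure of the barycenter cost or the structural results of \citet{ot:anderes2016discrete}, exactly as the paper implicitly relies on those references. In short, you reproduce the paper's argument, and the concern you raise about the combined constraint matrix is well placed; neither your proposal nor the paper carries that combinatorial point to full completion.
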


We introduce an intermediate function $\FF$, which appears to have same minimizers as $\EE$ and $\GG$, which will allow us to conclude that $\EE$ and $\GG$ have same minimizers.

\begin{prop*}
	Let $\mu^* \in \MM_+(\upperdiag)$ be a minimizer of $\EE$ and $(P_i)_i$ the corresponding optimal transport plans. Then for all $i$, $P_i$ fully transports $D_i$ onto $\mu^*$ (i.e. $P_i(x,\Delta) = 0$ for any $x \in \supp(D_i)$). In particular, $|\mu^*| \geq \max m_i$ and $\EE$ has the same minimizers as:
\begin{equation}
	\label{eq:equivalent_bis}
	 \FF(\mu) \defeq \frac{1}{N} \sum_{i=1}^{N} \bL_c (\mu, D_i + (|\mu| - m_i) \delta_\Delta)
\end{equation}
where $\mu \in \MM_+(\upperdiag)$ and satisfies $|\mu| \geq \max m_i$
\end{prop*}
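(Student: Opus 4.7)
The plan is to argue by contradiction through a directional perturbation of $\mu^*$ that exploits the geometry of the diagonal. Suppose, toward contradiction, that $P_i$ matches an amount $\epsilon_0 > 0$ of mass of $D_i$ with the diagonal at some point $x_0 \in \supp(D_i)$. Write $d := \|x_0 - \pi_\Delta(x_0)\| > 0$ and let $u := (x_0 - \pi_\Delta(x_0))/d$, the unit vector perpendicular to the diagonal pointing at $x_0$. For a small parameter $\eta \in (0, d)$, introduce $y_\eta := \pi_\Delta(x_0) + \eta u \in \upperdiag$; this point lies on the segment from $\pi_\Delta(x_0)$ to $x_0$ and satisfies $\pi_\Delta(y_\eta) = \pi_\Delta(x_0)$, $\|y_\eta - x_0\| = d - \eta$ and $\|y_\eta - \pi_\Delta(y_\eta)\| = \eta$. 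The competitor to test against $\mu^*$ would be $\mu_\eta := \mu^* + \epsilon_0 \delta_{y_\eta}$.

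I would then bound $\EE(\mu_\eta)$ from above by exhibiting explicit transport plans at $\mu_\eta$. For diagram $i$, modify $P_i$ by three local changes: delete the $\epsilon_0$ mass on the $x_0 \leftrightarrow \Delta$ channel, reroute it as $x_0 \leftrightarrow y_\eta$ (using the $\epsilon_0$ new mass at $y_\eta$ on the appropriate side), and absorb the residual $\epsilon_0$ mismatch at $\Delta$ by a free $\Delta \leftrightarrow \Delta$ channel (using that $\projop \mu_\eta - \projop \mu^* = \epsilon_0 \delta_\Delta$). A direct mass-balance check shows feasibility, and the net cost variation for diagram $i$ equals exactly $\epsilon_0 [(d - \eta)^p - d^p]$. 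For every $j \neq i$, augment $P_j$ with a single fresh $y_\eta \leftrightarrow \Delta$ channel of mass $\epsilon_0$, which costs $\epsilon_0 \eta^p$ since $\|y_\eta - \pi_\Delta(y_\eta)\| = \eta$. Summing these contributions gives
\[
\EE(\mu_\eta) - \EE(\mu^*) \leq \epsilon_0 \bigl[(d - \eta)^p - d^p + (N-1)\eta^p \bigr].
\]
For $p \geq 2$ (the setting of the paper), Taylor expansion gives $(d - \eta)^p - d^p = -p d^{p-1}\eta + O(\eta^2)$ while $(N-1)\eta^p = O(\eta^2)$, so the right-hand side is strictly negative for all sufficiently small $\eta$, contradicting the optimality of $\mu^*$. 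Hence $P_i$ cannot match any mass of $D_i$ with the diagonal, and the inequality $|\mu^*| \geq \max_i m_i$ follows immediately, since the $m_i$ units of mass of each $D_i$ must then be received entirely by $\supp(\mu^*)$, whose total mass is $|\mu^*|$.

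The main obstacle here is striking the right balance in the perturbation: the naive choice $\mu^* + \epsilon \delta_{x_0}$ yields only a change of order $(N-2)\epsilon d^p$, with the wrong sign as soon as $N \geq 2$. The crucial trick is placing the new atom $y_\eta$ close to the diagonal, so that the saving on diagram $i$ scales like $\eta$ while the losses on the remaining $N-1$ diagrams stay in $O(\eta^p) = o(\eta)$. For the concluding equivalence between the minimizers of $\EE$ and $\FF$, I would combine two observations: on $\{|\mu| \geq \max_i m_i\}$ one always has $\EE(\mu) \leq N \FF(\mu)$ (any $\FF$-plan lifts to a $\EE$-plan by adjoining a free $\Delta \leftrightarrow \Delta$ sub-plan of mass $m_i$), while the structural property just proved gives $\EE(\mu^*) = N\FF(\mu^*)$ at any $\EE$-minimizer (the optimal plan decomposes canonically as $m_i$ units of $\Delta \leftrightarrow \Delta$ plus an optimal $\FF$-plan). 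The chain $N\FF(\mu) \geq \EE(\mu) \geq \EE(\mu^*) = N\FF(\mu^*)$ shows that every $\EE$-minimizer also minimizes $\FF$, and the reverse implication follows by comparing any $\FF$-minimizer with a fixed $\EE$-minimizer through the same inequalities.
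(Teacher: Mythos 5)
Your perturbation argument is correct and mirrors the paper's proof: the paper also reroutes the mass that $P_i$ sent from $x\in\supp(D_i)$ to $\Delta$ onto a new atom placed on the segment $[\pi_\Delta(x),x]$, absorbs the surplus via the free $\Delta\leftrightarrow\Delta$ channel, and charges each of the other $N-1$ diagrams the cost of pushing the new atom to the diagonal; the only difference is that the paper places the atom at the explicit optimum $x' = \frac{x + (N-1)\pi_\Delta(x)}{N}$ (which is your $y_\eta$ at $\eta = d/N$, the minimizer of $\eta\mapsto(d-\eta)^2+(N-1)\eta^2$) instead of letting $\eta\to 0$. Your concluding chain of inequalities comparing $\EE$- and $\FF$-minimizers via $\EE\leq\FF$ together with equality at an $\EE$-minimizer is also exactly the paper's argument.
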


\begin{proof}
Fix $i \in \{1\dots N\}$. Let $P_i$ be an optimal transport plan between $\mu^* + m_i \delta_\Delta$ and $D_i + |\mu^*| \delta_\Delta$. Let $x  \in \supp(D_i)$. Assume that there is a fraction of mass $t > 0$ located at $x$ that is transported to the diagonal $\Delta$. 
	
	Consider the measure $\mu' \defeq \mu^* + t \delta_{x'}$, where $x' = \frac{x + (N-1)\pi_\Delta(x)}{N}$. We now define the transport plan $P_i'$ which is adapted from $P_i$ by transporting the previous mass to $x'$ instead of $\Delta$ (inducing a cost $t \|x-x'\|^2$ instead of $t \|x-\Delta\|^2$). Extend all other optimal transport plans $(P_j)_{j \neq i}$ to $P_j'$ by transporting the mass $t$ located at $x'$ in $\mu'$ to the diagonal $\Delta$ (inducing a total cost $(N-1) t \|x' - \Delta\|^2$), and everything else remains unchanged. One can observe that the new $(P_j')_j$ are admissible transport plans from $\mu' + m_j \delta_\Delta$ to $D_j + |\mu'| \delta_\Delta$ (respectively) inducing an energy $\EE(\mu')$ strictly smaller than $\EE(\mu^*)$ , leading to a contradiction since $\EE(\mu^*)$ is supposed to be optimal.
	
	To prove equivalence between the two problems considered (in the sense that they have the same minimizers), we introduce $\mu^*_\EE$ and $\mu^*_\FF$ which are minimizers of $\EE$ and $\FF$ respectively. Note that the existence of minimizers is given by standard arguments in optimal transport theory (lower semi-continuity of $\EE, \FF, \GG$ and relative compactness of minimizing sequences, see for example \citep[Prop.~2.3]{ot:agueh2011barycenters}). We first observe that $\EE(\mu) \leq \FF(\mu)$ for all $\mu$ (adding the same amount of mass on the diagonal can only decrease the optimal transport cost).
		
	This allows us to write:
	\begin{align*} 
	\FF(\mu^*_\EE) &= \EE(\mu^*_\EE) && \text{We can remove $m_i \delta_\Delta$ from both sides} \\
					&\leq 	\EE(\mu^*_\FF) && \text{since $\mu^*_\EE$ is a minimizer of $\EE$} \\
					&\leq \FF(\mu^*_\FF) && \text{since $\EE(\mu) \leq \FF(\mu)$} \\
					&\leq \FF(\mu^*_\EE) && \text{since $\mu^*_\FF$ is a minimizer of $\FF$}
	\end{align*}
	Hence, all these inequalities are actually equalities, thus minimizers of $\EE$ are minimizers of $\FF$ and vice-versa. 
\end{proof}

We can now prove that $\FF$ as the same minimizers as $\GG$ which will finally prove Proposition \ref{thm:PDB_as_OTPB}.

\begin{proof}[Proof of Proposition \ref{thm:PDB_as_OTPB}]
	Let $\mu^*_\GG$ be a minimizer of $\GG$. Consider $\mu_\Delta \defeq (\mtot - |\mu^*_\GG|) \delta_\Delta$. We observe that $\mu_\Delta$ is always transported on $\{\Delta\}$ (inducing a cost of $0$) for each of the transport plan $P_i \in \Pi(\mu^*_\GG +\mu_\Delta, \widehat{D_i})$ for minimality considerations (as in previous proof). Observe also (as in previous proof) that $\GG(\mu) \leq \FF(\mu)$ for any measure $\mu$, yielding:
	\begin{align*}
		\GG(\mu^*_\GG) &= \FF(\mu^*_\GG) && \text{remove $\mu_\Delta$ from both sides} \\
						&\geq \FF(\mu^*_\FF) && \text{since $\mu^*_\FF$ is a minimizer of $\FF$} \\
						&\geq \GG(\mu^*_\FF) && \text{since $\GG(\mu) \leq \FF(\mu)$} \\
						&\geq \GG(\mu^*_\GG) && \text{since $\mu^*_\GG$ is a minimizer of $\GG$}
	\end{align*}
	This implies that minimizers of $\GG$ are minimizers of $\FF$ (and thus of $\EE$) and conversely. 
\end{proof}
\textbf{Details for Corollary of Proposition \ref{thm:PDB_as_OTPB}}~\\
\vspace{-0.3cm}
	\begin{enumerate}[label=(\roman*)]
		\item Given $N$ diagrams $D_1 \dots D_N$ and $(x_1 \dots x_N) \in \supp(\widehat{D_1}) \times \dots \times \supp(\widehat{D_N})$, among which $k$ of them are equals to $\Delta$, on can easily observe (this is mentioned in \citet{tda:turner2014frechet}) that $z \mapsto \sum_{i=1}^N \|z - x_i\|_2^2$ admits a unique minimizer $x^* = \frac{(N-k) \overline{x} + k \pi_\Delta(\overline{x})}{N}$, where $\overline{x}$ is the arithmetic mean of the $(N-k)$ non-diagonal points in $x_1 \dots x_N$.
		
		The localization property (see \S 2.2 of \citet{ot:carlier2015numerical}) states that the support of any barycenter is included in the set $S$ of such $x^*$s which is finite, proving in particular that barycenters of $\widehat{D_1} \dots \widehat{D_N}$ have a discrete support included in some known set. Note that a similar result is also mentioned in \cite{ot:anderes2016discrete}.
		
		\item As a consequence of previous point, one can describe a barycenter of $\widehat{D_1} \dots \widehat{D_N}$ as a vector of weight $w \in \R_+^s$, where $s$ is the cardinality of $S$ and cast the barycenter problem as a Linear Programming (LP) one (see for example \S 3.2 in \cite{ot:anderes2016discrete} or \S 2.3 and 2.4 in \cite{ot:carlier2015numerical}). More precisely, the problem is equivalent to:
		\begin{align*}
			\minimize_{w \in \R_+^s} w^T c \\
			\mathrm{s.t.} \forall i = 1 \dots N, A_i w = b_i
		\end{align*}
	Here, $c \in \R^s$ is defined as $c_j = \sum_{k = 1}^N \|x^*_j - x_{k,j} \|_2^2$, where $x^*_j$ is the mean (as defined above) associated to $(x_{k,j})_{k=1}^N$. The constraints correspond to marginals constraints: $b_i$ is the weight vector associated to $\widehat{D_i}$ on each point of its support. Note that each $b_i$ has integer coordinates and that $A_i$ is totally unimodular (see \citep{schrijver1998integerProgramming}), and thus among optimal $w$, some of them have integer coordinates.
	\end{enumerate}

\textbf{Bad local minima of $\widehat{\EE}$. } The following lemma illustrate specific situation which lead algorithms proposed by Turner et al. to get stuck in bad local minima.

\begin{lemma}
\label{lemma:turner_bad}
For any $\kappa \geq 1$, there exists a set of diagrams such that $\widehat{\EE}$ admits a local minimizer $D_{\mathrm{loc}}$ satisfying:
\[ \widehat{\EE}(D_{\mathrm{loc}}) \geq \kappa \widehat{\EE}(D_{\mathrm{opt}}) \]
where $D_{\mathrm{opt}}$ is a global minimizer. Furthermore, there exist sets of diagrams so that the B-Munkres algorithm always converges to such a local minimum when initialized with one of the input diagram.
\end{lemma}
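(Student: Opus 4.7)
The plan is to construct, for each $\kappa \geq 1$, an explicit family of input diagrams together with a pair $(D_{\mathrm{opt}}, D_{\mathrm{loc}})$ of Fr\'echet minima for which $\widehat{\EE}(D_{\mathrm{loc}}) \geq \kappa\,\widehat{\EE}(D_{\mathrm{opt}})$, and then verify that the B-Munkres iteration is stationary at $D_{\mathrm{loc}}$ when initialized with a specific input diagram. Concretely, I would fix $L > 0$ large, set $a = (0,\eta) \in \upperdiag$ for $\eta \ll L$ (so $\|a-\pi_\Delta(a)\|_2$ is negligible) and $b = (0,L) \in \upperdiag$ (so $\|b-\pi_\Delta(b)\|_2^2 = L^2/2$ and $\|a-b\|_2^2 \approx L^2$), pick an integer $N \geq \kappa$, and take the $N+1$ input diagrams $D_1 \defeq \{a\}$ and $D_2 = \dots = D_{N+1} \defeq \{a,b\}$. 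Setting $D_{\mathrm{loc}} \defeq \{a\}$ and $D_{\mathrm{opt}} \defeq \{a,b\}$, a direct evaluation of the partial matching definition~\eqref{eq:Wasserstein_Dgm} of $d_2^2$ (noting that $a\to a$ is always optimal since $\|a-b\|_2^2 > \eta^2/2 + L^2/2$) gives
\[
\widehat{\EE}(D_{\mathrm{loc}}) \;=\; 0 + N\cdot \tfrac{L^2}{2} \;=\; \tfrac{NL^2}{2}, \qquad
\widehat{\EE}(D_{\mathrm{opt}}) \;=\; \tfrac{L^2}{2} + N\cdot 0 \;=\; \tfrac{L^2}{2},
\]
so the ratio equals $N \geq \kappa$. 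I would also briefly verify against the obvious competitors $\{b\}$, $\emptyset$, etc.\ that $D_{\mathrm{opt}}$ is indeed the global minimum of $\widehat{\EE}$ (all of them fail by an additive $O(\eta^2)$ or $O(L^2)$ amount).

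To show that $D_{\mathrm{loc}}$ is a genuine local minimum of $\widehat{\EE}$ in the $d_2$ topology (and not just a fixed point of the iteration), I would pick a radius $r$ with $r^2 \ll L^2$ and consider any $D' \neq D_{\mathrm{loc}}$ with $d_2(D',D_{\mathrm{loc}}) < r$. Such a $D'$ necessarily has the form $\{a+\delta, y_1, \dots, y_m\}$ where $\|\delta\|_2 < r$ and each $y_j$ lies within distance $r$ of the diagonal. For every input diagram $D_i$, the optimal matching between $D'$ and $D_i$ still pairs $a+\delta$ with $a$ and sends every $y_j$ and every stray copy of $b$ to the diagonal, because for $r$ small $\|y_j - b\|_2^2 > \|y_j - \pi_\Delta(y_j)\|_2^2 + \|b - \pi_\Delta(b)\|_2^2$. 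Summing the resulting costs across the $N+1$ inputs yields
\[
\widehat{\EE}(D') \;=\; \widehat{\EE}(D_{\mathrm{loc}}) \;+\; (N+1)\Bigl(\|\delta\|_2^2 + \sum_{j=1}^m \|y_j - \pi_\Delta(y_j)\|_2^2\Bigr),
\]
which is strictly larger than $\widehat{\EE}(D_{\mathrm{loc}})$ unless $D' = D_{\mathrm{loc}}$, establishing strict local minimality.

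The B-Munkres claim then follows at once: initializing Turner's iteration with $Y_0 \defeq D_1 = \{a\}$, the optimal matching between $Y_0$ and each $D_i$ pairs $a$ with $a$ (forcing the copy of $b$ onto the diagonal whenever $i \geq 2$); the barycentric update replaces the unique point of $Y_0$ by the arithmetic mean of $N+1$ copies of $a$, which is again $a$. The iterates are stationary and the algorithm converges in a single step to $D_{\mathrm{loc}}$.

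The main obstacle I foresee is the local-minimality verification: one must rigorously rule out that a $d_2$-small perturbation of $D_{\mathrm{loc}}$ could reshape the optimal matching (for instance, that many tiny near-diagonal points added to the perturbation could collectively capture the far-away $b$'s). This is handled by the strict gap $\|b-\pi_\Delta(b)\|_2 \gg r$, which freezes the matching regime on the whole $d_2$-ball of radius $r$ and reduces $\widehat{\EE}$ to a positive-definite quadratic form in the perturbation parameters; everything else is routine computation.
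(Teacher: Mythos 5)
Your construction fails at exactly the point you flag as the ``main obstacle,'' and the fix you propose does not work. You claim that for $D'$ in a small $d_2$-ball around $D_{\mathrm{loc}}=\{a\}$ the matching regime is frozen because $\|y_j-b\|_2^2>\|y_j-\pi_\Delta(y_j)\|_2^2+\|b-\pi_\Delta(b)\|_2^2$ whenever $y_j$ is close to the diagonal. This inequality is false when $y_j$ sits near the orthogonal projection $\pi_\Delta(b)=(L/2,L/2)$ of the far-away point. Take $y_\epsilon=(L/2-\epsilon,\,L/2+\epsilon)$ with $\epsilon>0$ small; then $\|y_\epsilon-\pi_\Delta(y_\epsilon)\|_2^2=2\epsilon^2$, so $D'=\{a,y_\epsilon\}$ is at $d_2$-distance $\sqrt{2}\,\epsilon$ from $D_{\mathrm{loc}}$, yet $\|y_\epsilon-b\|_2^2=2(L/2-\epsilon)^2=L^2/2-2L\epsilon+2\epsilon^2<2\epsilon^2+L^2/2$. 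Hence for each of the $N$ diagrams $\{a,b\}$ the matching $y_\epsilon\mapsto b$ is strictly cheaper than sending both to the diagonal, and one gets
\[
\widehat{\EE}(D')-\widehat{\EE}(D_{\mathrm{loc}})=2\epsilon^2+N\bigl(-2L\epsilon+2\epsilon^2\bigr)=2(N+1)\epsilon^2-2NL\epsilon<0
\]
for all small $\epsilon>0$. So $\{a\}$ is \emph{not} a local minimizer of $\widehat{\EE}$ in the $d_2$ topology, and the first claim of the lemma is not established by this example. The fact that the displaced points have small distance to the diagonal does not confine them near $a$; they can slide anywhere along the diagonal, in particular toward $\pi_\Delta(b)$, and that is precisely how a local escape direction appears. (As a minor secondary point, $\{a,b\}$ is not the global minimizer either: replacing $b$ by the Fr\'echet point of $N$ copies of $b$ and one diagonal point strictly lowers the energy. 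This does not hurt your ratio bound, since the true optimum is only smaller, but the sentence ``so the ratio equals $N$'' is not correct.)

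The paper avoids this pitfall by choosing a configuration in which the candidate local minimizer already occupies the ``trap'' positions near the relevant diagonal projections, so that inserting an extra near-diagonal point cannot capture any mass more cheaply than it already is; it also uses a separate three-diagram configuration to handle the B-Munkres claim, rather than trying to make one example serve both purposes. To repair your argument you would need a $D_{\mathrm{loc}}$ that, for every input $D_i$ and every off-diagonal point $x\in D_i$ currently routed to the diagonal, has no room for a tiny near-diagonal insertion near $\pi_\Delta(x)$ to undercut the cost $\|x-\pi_\Delta(x)\|^2$. Your $\{a\}$ leaves the $b$'s completely unattended, which is exactly what makes the escape possible; the B-Munkres stationarity you observe is a property of that algorithm (which does not add points), not evidence of local minimality of $\widehat{\EE}$.
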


\begin{proof} 
	We consider the configuration of Fig.~\ref{fig:bad_config} where we consider two diagrams with $1$ point (blue and green diagram) and their correct barycenter (red diagram) along with the orange diagram ($2$ points). It is easy to observe that when restricted to the space of persistence diagram, the orange diagram is a minimizer of the function $\widehat{\EE}$ (in which the algorithm could get stuck if initialized poorly). It achieves an energy of $\frac{1}{2} ((\frac{1}{2} + \frac{1}{2})^2 + (\frac{1}{2} + \frac{1}{2})^2) = 1$ while the red diagram achieves an energy of $\frac{1}{2} (\sqrt{\epsilon}^2 + \sqrt{\epsilon}^2) = \epsilon$. This example proves that there exist configurations of diagrams so that $\widehat{\EE}$ has arbitrary bad local minima.
	
One could argue that when initialized to one of the input diagram (as suggested in \citep{tda:turner2014frechet}), the algorithm will not get stuck to the orange diagram. Fig.~\ref{fig:always_bad} provide a configuration involving three diagrams with two points each where the algorithm will always get stuck in a bad local minimum when initialized with any of the three diagrams. The analysis is similar to previous statement.	

\begin{figure*}[ht]
		\centering
		\begin{subfigure}[t]{0.45\linewidth}
		\centering
		\includegraphics[scale=0.6]{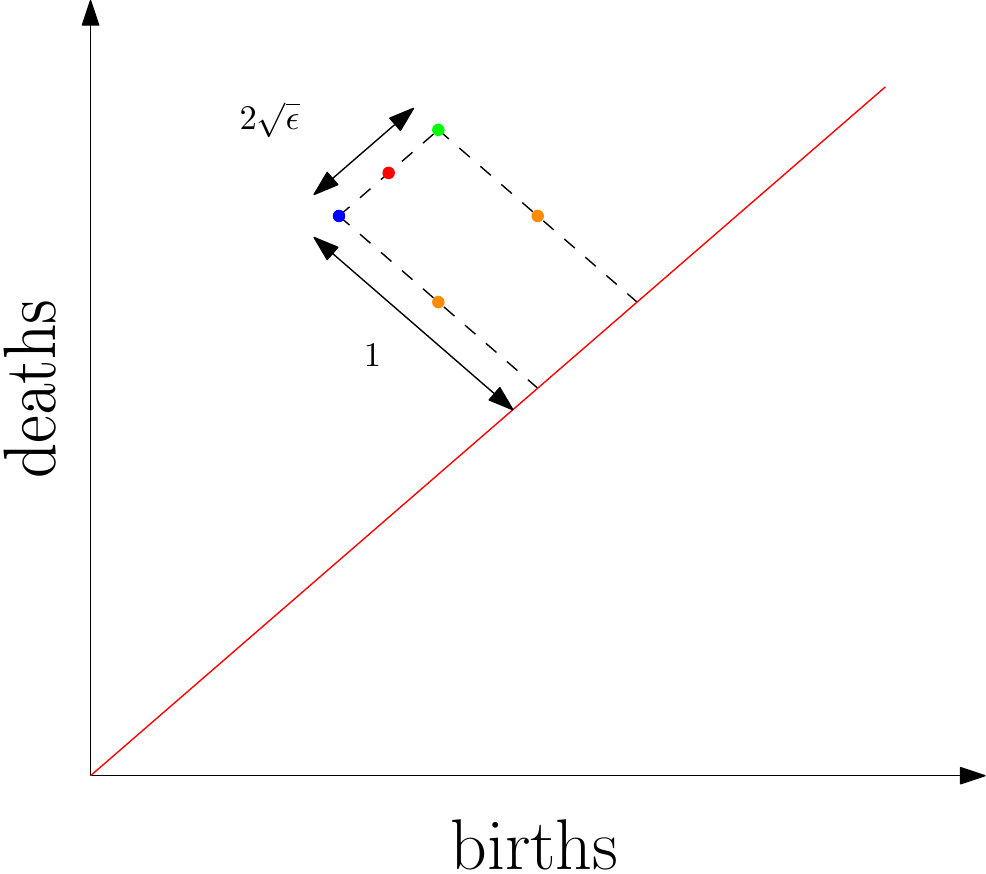}
		\caption{Example of arbitrary bad local minima of $\widehat{\EE}$. Blue point and green point represent our two diagrams of interest. Red point is a global minimizer of $\widehat{\EE}$. The two orange points give a diagram which is a local minimizer of $\widehat{\EE}$ achieving an energy arbitrary higher (relatively) than the one of the red diagram (as $\epsilon$ goes to $0$).}
				\label{fig:bad_config}
		\end{subfigure} %
		\hspace{.5cm}
		\begin{subfigure}[t]{0.45\linewidth}
		\centering
		\includegraphics[scale=0.5]{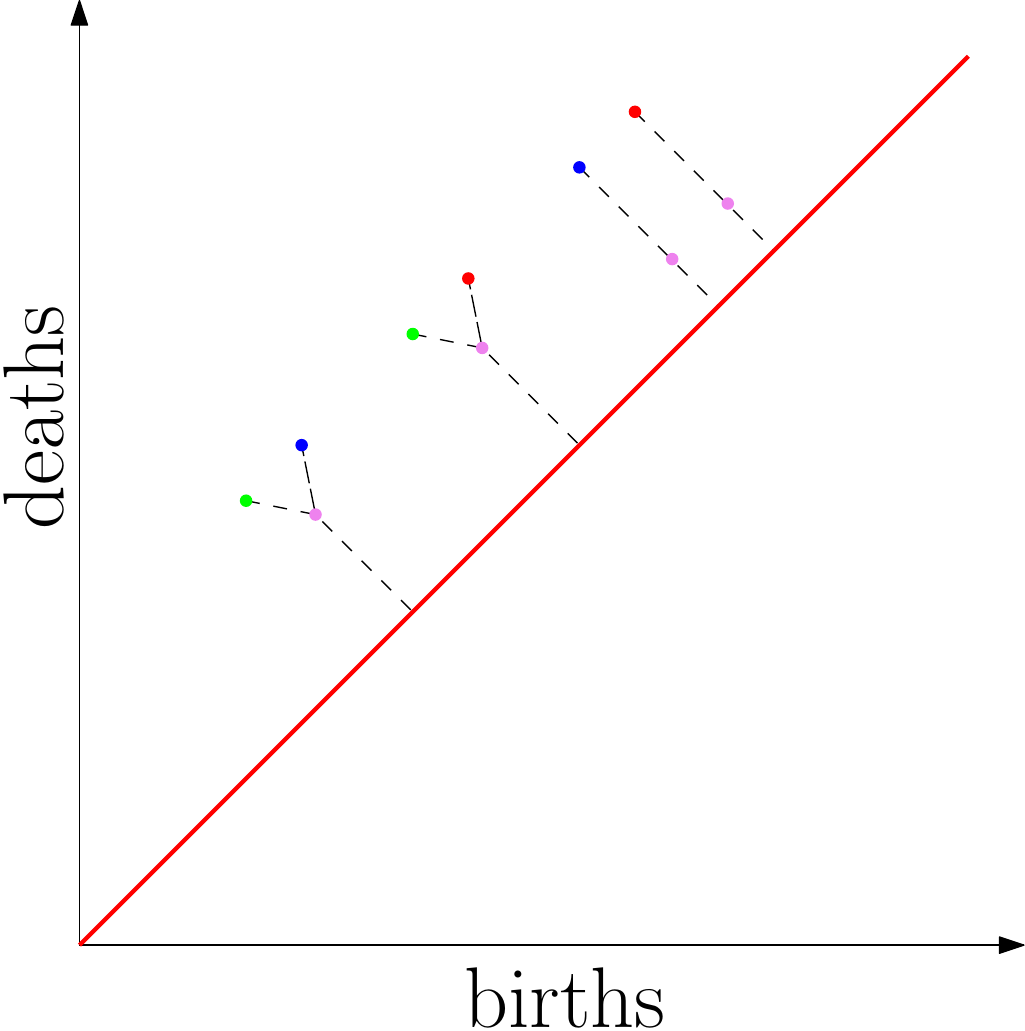}
		\caption{Failing configuration for B-Munkres algorithm. Three diagrams (red, blue, green) along with the output of Turner et al algorithm (purple) when initialized on the green diagram (we have a similar result by symmetry when initialized on any other diagram).}
			\label{fig:always_bad}
		\end{subfigure}
		\caption{Example of simple configurations in which the B-Munkres algorithm will converge to arbitrarily bad local minima}
	\end{figure*}
\end{proof}

\end{document}